\newtheorem{assumption}{Assumption}
\newtheorem{defn}{Definition}
\newtheorem{thm}{Theorem}
\newtheorem{lemma}{Lemma}
\newcommand{\f}{f}
\newcommand{\g}{f^{-1}}
\renewcommand{\geq}{\geqslant}
\renewcommand{\leq}{\leqslant}
\newcommand{\bm}[1]{\boldsymbol #1}
\newcommand{\btheta}{\boldsymbol \theta}
\renewcommand{\Box}{\hspace*{\fill}~$\square$ \\}
\newcommand{\Real}{\mbox{$\mathbb{R}$}}
\begin{document}

\title{Gradient-Free Learning Based on the Kernel and the Range Space}
\author{Kar-Ann Toh$^{1}$\footnote{Corresponding author.}, Zhiping Lin$^{2}$, Zhengguo Li$^{3}$,
\\ Beomseok Oh$^{2}$ and Lei Sun$^{4}$\\
{\footnotesize $^1$School of Electrical and Electronic Engineering}\\*[-2.5mm]
{\footnotesize Yonsei University, Seoul, Korea\ 03722} \\
{\footnotesize $^2$School of Electrical and Electronic Engineering} \\*[-2.5mm]
{\footnotesize Nanyang Technological University, Singapore 639798} \\
{\footnotesize $^3$Institute for Infocomm Research} \\*[-2.5mm]
{\footnotesize 1  Fusionopolis Way, Singapore 138632} \\
{\footnotesize $^4$School of Information and Electronics} \\*[-2.5mm]
{\footnotesize Beijing Institute of Technology Beijing, PR China, 100081} \\
{\footnotesize Emails: katoh@yonsei.ac.kr, ezplin@ntu.edu.sg,
ezgli@i2r.a-star.edu.sg,}\\*[-2.5mm] {\footnotesize bsoh@ntu.edu.sg, and
sunlei@bit.edu.cn} }
\maketitle

\begin{abstract}
In this article, we show that solving the system of linear equations by manipulating the
kernel and the range space is equivalent to solving the problem of least squares error
approximation. This establishes the ground for a gradient-free learning search when the
system can be expressed in the form of a linear matrix equation. When the nonlinear
activation function is invertible, the learning problem of a fully-connected multilayer
feedforward neural network can be easily adapted for this novel learning framework. By a
series of kernel and range space manipulations, it turns out that such a network learning
boils down to solving a set of cross-coupling equations. By having the weights randomly
initialized, the equations can be decoupled and the network solution shows relatively
good learning capability for real world data sets of small to moderate dimensions. Based
on the structural information of the matrix equation, the network representation is found
to be dependent on the number of data samples and the output dimension.
\end{abstract} 
{\bf Keywords: } Least Squares Error, Linear Algebra, Multilayer Neural Networks.

\section{Introduction}

\subsection{Background}

The problem of supervised learning has often been treated as an optimization task where
an error metric is minimized. For problems which can be formulated as the system of
linear equations, an estimation based on the least squares error minimization provides an
analytical solution. Because the size of data samples and the size of parameters seldom
have an exact match, the solution comes in either the \emph{primal} form for
over-determined systems or the \emph{dual} form for under-determined systems.

For nonlinear formulations, a search based on the gradient descent is often utilized
towards seeking a numerical solution. In the early days of multilayer network learning, a
brilliant utilization of the chain-rule in the name of backprogagation algorithm
\cite{KelleyH1,Werbos10,HechNiel1,Werbos33,Werbos11,Haykin1} had made possible numerical
computation of gradients to arrive at reasonable learning solution for complex network
functions under the extremely limited memory and processing power during that time. This
algorithm, together with the advancement in personal computer and theoretical support on
approximate realization of continuous mappings \cite{Funa1,Hornik1,Cybenko1,HechNiel3}
had paved the way towards the booming of applications using neural networks in the 1980s.
Subsequently, in the 1990s, more computationally complex first- and second-order search
methods became realizable due to another leap of the computational hardware and memory
advancement (see e.g., \cite{Battiti1,Patrick11,Barnard1}). Apart from the exploration of
shallow \cite{Duin11} and thin \cite{Guliyev1} network architectures, attempts to seek a
globally optimized network learning can be found \cite{ShangY1,TangZY1}.

In the late 2000s, the artificial neural network reemerged in the name of deep learning
\cite{Ian1} due to the much improved training regimens, more powerful computer processors
and large amount of data. Moreover, the large number of high-level open source libraries
such as the Tensorflow, Keras, Pytorch, Caffe, Theano, Microsoft Cognitive Toolkit,
Apache MXNet, etc., have significantly elevated the popularization of deep learning
applications in various fields. Despite the much improved network learning regimens, yet
a large amount of deep learning falls back to the chain-rule based backprogagation when
the full functional gradient cannot be easily computed.

In search of more powerful learning methods for more complex networks, several
developments attempted beyond backpropagation and aimed at better computational
efficiency. A notable example is the \emph{alternating direction method of multipliers}
(ADMM) approach \cite{Taylor1} which combines the alternating direction method and the
Bregman iteration to train networks without the gradient descent steps. By forgoing the
global optimality attempt as a whole, this method reduces the network training problem to
a sequence of minimization substeps that can each be solved globally in closed form.
Together with the nonlinear search on the activation function, such sequential
minimization renders the overall minimization process iterative. Another example is given
in \cite{Wilamov1} where the network learning has been accomplished by utilizing the
forward-only computation instead of backpropagation. Some other works explored the
synthetic gradients towards pipeline parallelism  to gain computational efficiency
\cite{Jaderberg1}. From another perspective, the derivative-free optimization has also
been attempted. A review of derivative-free optimization algorithms on 22 implementations
for 502 problems can be found in \cite{Rios1}. Such an active research in the field shows
that complex functional and network learning remain an important topic of high impact for
advancement in the field.

\subsection{Motivations and Contributions}

When a common activation function is chosen in each layer, the fully-connected
feedforward neural network can be expressed compactly by matrix-vector notation despite
of its complex and nonlinear structure. This is particularly applicable to networks with
a replicative layer structure. Under such a structural resemblance to the system of
linear equations, it would be interesting to see if any analytical means from the
literature of linear systems can be found towards solving the network weights. This forms
our first motivation.

The next issue is related to the well accustomed strategy of error cost driven search for
learning the network weights. The error cost metric or the loss function comes in various
forms, ranging from the least squares distance, through the logistic cost, to the entropy
cost. While the logistic cost is apparently originated from the statistical viewpoint for
classification error counting, the entropy cost seems to arise from the information
theoretic viewpoint. Inherently, both of these cost formulations attempt to handle
outliers that the least squares error distance observes under limited sample size.
However, the complexity of the logistic and the entropy costs have often masked the
advantages that they carry, particularly in applications related to complex real-world
data. In view of the large data sets available in the recent context, the least squares
error cost remains a simple and yet effective approach to solving real-world problems
even though it may be far from fully satisfactory. However, the need for solving the
error gradient formulation according to the first-order necessary condition for
optimality remains an almost inevitable task. In order to seek for an analytical
solution, it is sensible to see whether the gradient of error computation can be
by-passed. Hence the second motivation.

This article addresses the issue of effective learning without needing of gradient
computation.\footnote{The preliminary ideas in this paper have been presented at the 17th
IEEE/ACIS International Conference on Computer and Information Science (ICIS 2018)
\cite{Toh97} where the first analytic learning network was introduced.} Unlike many
gradient-free search methods \cite{Rios1}, a surrogate of gradient is not needed in our
formulation. One of our main contributions is the discovery of the equivalence between
the manipulation under the kernel-and-range space and that under the least squares error
approximation based on the existing theory of linear algebra. This observation is
exploited in multiple layer network learning where it turns out that the network weights
can be expressed in analytical but cross-coupling form. By prefixing a set of the coupled
weights, a decoupled solution can be obtained. Our analysis on network representation
supports validity of the solution by the prefixed weight initialization. Moreover, it is
found that the minimum number of adjustable parameters needed depends only on the number
of data samples apart from the output dimension. The empirical evaluations using both
synthetic data and benchmark real-world data validated the feasibility and the efficiency
of the formulation.

\subsection{Organization}

The article is organized as follows. A revisit to the linear algebra in dealing with the
system of linear equations is given right after this introductory section. Essentially we
found that a manipulation of linear equations under the column and row spaces boils down
to having a least squares error approximation to fitting the equations. This observation
is next exploited to solve the network training problem in
Section~\ref{sec_net_learning}. Starting from a single-layer network, through the
two-layer network, and finally the multilayer networks, we show that network learning can
be performed without using error gradient descent. The network representation is further
shown to be dependent on the number of data samples apart from the output dimension. In
Section~\ref{sec_synthetic}, two case studies are performed to observe the learning
behavior. The feasibility of the formulation is further studied in
Section~\ref{sec_expts} on real-world data sets. Finally, some concluding remarks are
given in Section~\ref{sec_conclusion}.

\section{Preliminaries} \label{sec_prelim}

\subsection{Least Squares Approximation and Least Norm Solution}\label{sec_LS_approx}

Consider the system of linear equations given by
\begin{equation}\label{eqn_LinearEqn}
    {\bf A}\btheta = {\bf b},
\end{equation}
where $\btheta\in\Real^{d}$ is the unknown to be solved, ${\bf A}\in\Real^{m\times d}$
and ${\bf b}\in\Real^{m}$ are the known system data. For an over-determined system with
$m>d$, the $m$ equations in \eqref{eqn_LinearEqn} are generally unsolvable when a strict
equality for the equation is desired (see e.g., \cite{StrangG1}). However, by multiplying
${\bf A}^T$ to both sides of \eqref{eqn_LinearEqn}, the resultant $d$ equations
\begin{equation}\label{eqn_NormalEqn}
    {\bf A}^T{\bf A}\btheta = {\bf A}^T{\bf b}
\end{equation}
give the least squares error solution \eqref{eqn_d_space_primal} \cite{StrangG1} under
the \emph{primal solution space} as follows:
\begin{equation}\label{eqn_d_space_primal}
    \hat{\btheta} = ({\bf A}^T{\bf A})^{-1}{\bf A}^T{\bf b}.
\end{equation}
This can be understood by splitting ${\bf b}$ into the solvable part $\hat{{\bf b}}$,
which lies in the \emph{range} (\emph{column}) \emph{space} of ${\bf A}$; and the
unsolvable part $\bm{e}$, which lies in the \emph{null (row) space} of ${\bf A}$. The
null space is also known as the \emph{kernel} of ${\bf A}$. Based on this splitting,
\eqref{eqn_LinearEqn} can be written as
\begin{eqnarray}\label{eqn_LinearEqn3}
    {\bf A}\btheta &=& \hat{{\bf b}} + \bm{e}.
\end{eqnarray}
Since ${\bf A}\btheta$ is on the linear manifold of the Euclidean space, then based on
\eqref{eqn_LinearEqn} and \eqref{eqn_LinearEqn3}, we have
\begin{eqnarray}
    \|{\bf A}\btheta - {\bf b}\|^2 &=& \|{\bf A}\btheta - \hat{{\bf b}} - \bm{e} \|^2
    \nonumber\\
    &=&  \|\bm{e} \|^2 ,
    \label{eqn_squared_error_distance}
\end{eqnarray}
because ${\bf A}\btheta - \hat{{\bf b}}={\bf 0}$ as $\hat{{\bf b}}$ is the solvable part.
Since \eqref{eqn_NormalEqn} is the normal equation minimizing
\eqref{eqn_squared_error_distance}, we see that solving \eqref{eqn_LinearEqn} based on
\eqref{eqn_NormalEqn} minimizes the least squares error.

When the system is under-determined (i.e., $m<d$), then the number of equations is less
than that of the unknowns. However, we can project the unknown $\btheta\in\Real^d$ onto
the $\Real^m$ subspace \cite{Madych1} utilizing the row space of ${\bf A}$:
\begin{equation}\label{eqn_m_space}
    \btheta = {\bf A}^T\bm{s},
\end{equation}
where $\bm{s}\in\Real^m$. Substituting \eqref{eqn_m_space} into \eqref{eqn_LinearEqn}
gives
\begin{equation}\label{eqn_LinearEqn_m_space}
    {\bf A}{\bf A}^T\bm{s} = {\bf b}.
\end{equation}
Since ${\bf A}{\bf A}^T$ is now of full rank when ${\bf A}$ has full rank, $\bm{s}$ has a
unique solution given by
\begin{equation}\label{eqn_LinearEqn_m_space_s}
    \hat{\bm{s}} = ({\bf A}{\bf A}^T)^{-1}{\bf b}.
\end{equation}
Putting this $\bm{s}$ into \eqref{eqn_m_space} gives
\begin{equation}\label{eqn_m_space_dual}
    \hat{\btheta} = {\bf A}^T({\bf A}{\bf A}^T)^{-1}{\bf b}.
\end{equation}
This solution corresponds to minimizing $\|\btheta\|^2_2$ subject to ${\bf A}\btheta =
{\bf b}$ which is known as the \emph{least norm solution} \cite{Boyd1}. Interestingly,
this solution in the \emph{dual space} (the $m$ row space of ${\bf A}$) also minimizes
the sum of squared errors! This can be easily seen by substituting \eqref{eqn_m_space}
and \eqref{eqn_LinearEqn_m_space_s} into \eqref{eqn_squared_error_distance} which gives a
minimized $({\bf A}\hat{\btheta} - {\bf b})^T({\bf A}\hat{\btheta} - {\bf b})$. With
these observations, we make the following observation based on the conventional ground of
linear algebra (see e.g., \cite{Albert1,Adi1,SLCampbell1,Boyd1}).

\begin{lemma} \label{lemma_LS}
Solving for $\btheta$ in the system of linear equations of the form \eqref{eqn_LinearEqn}
in the column space (range) of ${\bf A}$ or in the row space (kernel) of ${\bf A}$ is
equivalent to solving the least squares error of the form
\eqref{eqn_squared_error_distance}. Moreover, the resultant solution $\hat{\btheta}$ is
unique with a minimum-norm value in the sense that
$\|\hat{\btheta}\|^2_2\leq\|\btheta\|^2_2$ for all feasible $\btheta$.
\end{lemma}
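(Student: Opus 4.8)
The plan is to split the claim into two parts --- the equivalence between the space manipulations and least-squares minimization, and the uniqueness together with the minimum-norm property --- and to organize the whole argument around the two orthogonal decompositions $\Real^m = {\rm range}({\bf A})\oplus\ker({\bf A}^T)$ and $\Real^d = {\rm row}({\bf A})\oplus\ker({\bf A})$. Since the over-determined and under-determined computations are already carried out in the text leading up to the statement, the proof mainly has to assemble those pieces into one argument and extend them past the implicit full-rank assumption used to write \eqref{eqn_d_space_primal} and \eqref{eqn_m_space_dual}.

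First I would prove the equivalence. Writing ${\bf b} = \hat{{\bf b}} + \bm{e}$ with $\hat{{\bf b}}$ the orthogonal projection of ${\bf b}$ onto ${\rm range}({\bf A})$ and $\bm{e}\in\ker({\bf A}^T)$, the Pythagorean identity already recorded in \eqref{eqn_squared_error_distance} gives $\|{\bf A}\btheta - {\bf b}\|^2 = \|{\bf A}\btheta - \hat{{\bf b}}\|^2 + \|\bm{e}\|^2$, so $\btheta$ minimizes the squared error if and only if ${\bf A}\btheta = \hat{{\bf b}}$, equivalently ${\bf b} - {\bf A}\btheta \perp {\rm range}({\bf A})$, which is exactly the normal equation \eqref{eqn_NormalEqn} --- the column-space manipulation. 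For the row-space manipulation I would substitute the ansatz $\btheta = {\bf A}^T\bm{s}$ of \eqref{eqn_m_space} into \eqref{eqn_NormalEqn} and note that it collapses to \eqref{eqn_LinearEqn_m_space}; conversely, any least-squares minimizer decomposes uniquely as $\btheta = {\bf A}^T\bm{s} + \bm{z}$ with $\bm{z}\in\ker({\bf A})$, and ${\bf A}^T\bm{s}$ is again a minimizer since ${\bf A}\bm{z} = {\bf 0}$. Hence operating in either ${\rm range}({\bf A})$ or ${\rm row}({\bf A})$ reproduces precisely the solution set of the least-squares problem \eqref{eqn_squared_error_distance}.

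Next I would settle uniqueness and minimality. The set of least-squares minimizers is an affine flat $\btheta^{\star} + \ker({\bf A})$ for any particular minimizer $\btheta^{\star}$; because $\Real^d = {\rm row}({\bf A})\oplus\ker({\bf A})$, this flat meets ${\rm row}({\bf A})$ in exactly one point $\hat{\btheta}$, and every other minimizer $\btheta = \hat{\btheta} + \bm{z}$ with $\bm{z}\in\ker({\bf A})$, $\bm{z}\neq{\bf 0}$ satisfies $\|\btheta\|^2_2 = \|\hat{\btheta}\|^2_2 + \|\bm{z}\|^2_2 > \|\hat{\btheta}\|^2_2$ by orthogonality. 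Thus $\hat{\btheta}$ is the unique minimum-norm least-squares solution. When ${\bf A}$ has full column rank, $\ker({\bf A})=\{{\bf 0}\}$ and $\hat{\btheta}$ reduces to \eqref{eqn_d_space_primal}; when ${\bf A}$ has full row rank the residual vanishes and $\hat{\btheta}$ reduces to \eqref{eqn_m_space_dual}; in general $\hat{\btheta} = {\bf A}^{+}{\bf b}$ with ${\bf A}^{+}$ the Moore--Penrose pseudoinverse, for which $({\bf A}^T{\bf A})^{-1}{\bf A}^T$ and ${\bf A}^T({\bf A}{\bf A}^T)^{-1}$ serve as the two full-rank surrogates.

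The main obstacle I anticipate is not a deep difficulty but a matter of precision: the phrasing ``equivalent'', ``feasible'', ``unique'', and ``minimum-norm'' is only literally correct once one leaves the full-rank formulas \eqref{eqn_d_space_primal}/\eqref{eqn_m_space_dual} behind in favour of the pseudoinverse, so the care goes into the rank-deficient bookkeeping and into pinning down exactly which sets the word ``equivalent'' identifies and over which $\btheta$ the inequality is asserted. Beyond that, the argument is driven entirely by the two orthogonal decompositions and by repeated use of the Pythagorean identity, and it needs no nontrivial estimates.
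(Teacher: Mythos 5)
Your proof is correct and rests on the same two orthogonal decompositions the paper itself uses ($\Real^m={\rm range}({\bf A})\oplus\ker({\bf A}^T)$ and $\Real^d={\rm row}({\bf A})\oplus\ker({\bf A})$, plus the Pythagorean identity), but you organize the argument differently and somewhat more generally. The paper treats the two regimes separately: for the under-determined case it follows Boyd, starting from exact feasibility ${\bf A}\btheta={\bf b}$ and verifying $(\btheta-\hat{\btheta})\perp\hat{\btheta}$ directly from the explicit formula \eqref{eqn_m_space_dual}; for the over-determined case it follows Albert, first bounding the residual below by $\|\bm{e}\|^2_2$ and then splitting any minimizer into its row-space and kernel components. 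You instead give a single unified argument: the least-squares minimizers form the affine flat $\btheta^{\star}+\ker({\bf A})$, this flat meets the row space of ${\bf A}$ in exactly one point, and that point is the unique minimum-norm solution by orthogonality. This buys two things the paper's version does not deliver: the over- and under-determined cases require no separate treatment, and the statement remains literally true when ${\bf A}$ is rank-deficient, where the full-rank formulas \eqref{eqn_d_space_primal} and \eqref{eqn_m_space_dual} must be replaced by the Moore--Penrose pseudoinverse --- a point the paper sidesteps by implicitly assuming full rank. Your version of the equivalence step is also slightly more careful than the paper's \eqref{eqn_squared_error_distance}, which asserts $\|{\bf A}\btheta-{\bf b}\|^2=\|\bm{e}\|^2$ only for $\btheta$ already solving the solvable part, whereas your identity $\|{\bf A}\btheta-{\bf b}\|^2=\|{\bf A}\btheta-\hat{{\bf b}}\|^2+\|\bm{e}\|^2$ holds for all $\btheta$ and makes the characterization of the minimizers immediate. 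The trade-off is that the paper's case-split stays closer to the explicit formulas actually used downstream in the algorithm, while yours is the cleaner statement of the underlying linear algebra.
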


\begin{proof}
The first assertion has been shown in the derivations above. The unique solution is given
by either \eqref{eqn_d_space_primal} or \eqref{eqn_m_space_dual} depending on the
available space provided by ${\bf A}$. The remaining task is to show the minimum-norm
value of the solution. The proof for the under-determined system ($m<d$) follows that of
\cite{Boyd1} where we first suppose ${\bf A}\btheta = {\bf b}$ in \eqref{eqn_LinearEqn}
and arrive at ${\bf A}(\btheta-\hat{\btheta})=\textbf{0}$ which can be substituted into
\begin{eqnarray}
  (\btheta-\hat{\btheta})^T\hat{\btheta} &=& (\btheta-\hat{\btheta})^T{\bf A}^T({\bf A}{\bf A}^T)^{-1}{\bf b} \nonumber\\
    &=& ({\bf A}(\btheta-\hat{\btheta}))^T({\bf A}{\bf A}^T)^{-1}{\bf b} \nonumber\\
    &=& 0.
\end{eqnarray}
This means that $(\btheta-\hat{\btheta})\perp\hat{\btheta}$ and so
\begin{eqnarray}
  \|\btheta\|^2_2 &=& \|\hat{\btheta}+\btheta-\hat{\btheta}\|^2_2 \nonumber\\
    &=& \|\hat{\btheta}\|^2_2 + \|\btheta-\hat{\btheta}\|^2_2 \nonumber\\
    &\geq& \|\hat{\btheta}\|^2_2,
\end{eqnarray}
i.e., $\hat{\btheta}$ has the smallest norm among all feasible solutions.

On the other hand, for the over-determined system ($m>d$), the proof follows that in
\cite{Albert1} where we have $\hat{{\bf b}}-{\bf A}\btheta$ in the range space of ${\bf
A}$ and $\bm{e}\perp\hat{{\bf b}}-{\bf A}\btheta$. Therefore
\begin{eqnarray}
  \|{\bf b}-{\bf A}\btheta\|^2_2 &=& \|\hat{{\bf b}}-{\bf A}\btheta+\bm{e}\|^2_2 \nonumber\\
  &=& \|\hat{{\bf b}}-{\bf A}\btheta\|^2_2 + \|\bm{e}\|^2_2 \nonumber\\
  &\geq& \|\bm{e}\|^2_2
\end{eqnarray}
This lower bound is attained at $\hat{\btheta}$ only if $\hat{\btheta}$ is such that
${\bf A}\hat{\btheta}=\hat{{\bf b}}$. Similarly, any such $\hat{\btheta}$ can be
decomposed into two orthogonal vectors $\hat{\btheta}^r+\hat{\btheta}^k$ where
$\hat{\btheta}^r$ falls in the range space of ${\bf A}^T$ and $\hat{\btheta}^k$ falls in
the kernel of ${\bf A}$. Thus ${\bf A}\hat{\btheta}={\bf A}\hat{\btheta}^r$ so that
$\|{\bf b}-{\bf A}\hat{\btheta}\|^2_2=\|{\bf b}-{\bf A}\hat{\btheta}^r\|^2_2$ and
$\|\hat{\btheta}\|^2_2 = \|\hat{\btheta}^r\|^2_2 + \|\hat{\btheta}^k\|^2_2
\geq\|\hat{\btheta}^r\|^2_2$. This completes the proof.
\end{proof}

The above result can be generalized to the system of equations with multiple outputs as
follows.

\begin{lemma} \label{lemma_LS_matrix}
Solving for $\bm{\Theta}$ in the system of linear equations of the form
\begin{equation}\label{eqn_LinearEqn_matrix}
    {\bf A}\bm{\Theta} = {\bf B}, \ \ \ {\bf A}\in\Real^{m\times d},\ \bm{\Theta}\in\Real^{d\times q},
    \ {\bf B}\in\Real^{m\times q}
\end{equation}
in the column space (range) of ${\bf A}$ or in the row space (kernel) of ${\bf A}$ is
equivalent to minimizing the sum of squared errors ($\textup{SSE}$) given by
\begin{equation}\label{eqn_squared_error_distance_trace}
    \textup{SSE} = trace\left( ({\bf A}\bm{\Theta}-{\bf B})^T({\bf A}\bm{\Theta}-{\bf B}) \right).
\end{equation}
Moreover, the resultant solution $\hat{\bm{\Theta}}$ is unique with a minimum-norm value
in the sense that $\|\hat{\bm{\Theta}}\|^2_2\leq\|\bm{\Theta}\|^2_2$ for all feasible
$\bm{\Theta}$.
\end{lemma}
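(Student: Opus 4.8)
The plan is to reduce the matrix equation \eqref{eqn_LinearEqn_matrix} to the vector case of Lemma~\ref{lemma_LS} by decoupling it column by column. Writing $\bm{\Theta} = [\btheta_1,\ldots,\btheta_q]$ and ${\bf B} = [{\bf b}_1,\ldots,{\bf b}_q]$ for the columns, the single matrix equation ${\bf A}\bm{\Theta}={\bf B}$ is equivalent to the $q$ independent systems ${\bf A}\btheta_j={\bf b}_j$, each of the form \eqref{eqn_LinearEqn}. The kernel-and-range space manipulations of Section~\ref{sec_LS_approx} act on each column separately, so applying \eqref{eqn_d_space_primal} columnwise assembles into $\hat{\bm{\Theta}}=({\bf A}^T{\bf A})^{-1}{\bf A}^T{\bf B}$ for the over-determined case and \eqref{eqn_m_space_dual} into $\hat{\bm{\Theta}}={\bf A}^T({\bf A}{\bf A}^T)^{-1}{\bf B}$ for the under-determined case.

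First I would record the two identities that split the matrix objective into a sum over columns. Since the $j$-th diagonal entry of $({\bf A}\bm{\Theta}-{\bf B})^T({\bf A}\bm{\Theta}-{\bf B})$ is $\|{\bf A}\btheta_j-{\bf b}_j\|^2_2$, one has
\[
  \textup{SSE} = trace\left( ({\bf A}\bm{\Theta}-{\bf B})^T({\bf A}\bm{\Theta}-{\bf B}) \right) = \sum_{j=1}^{q}\|{\bf A}\btheta_j-{\bf b}_j\|^2_2 ,
\]
and likewise $\|\bm{\Theta}\|^2_2 = \sum_{j=1}^{q}\|\btheta_j\|^2_2$ when $\|\cdot\|_2$ denotes the Frobenius norm. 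Because the $j$-th summand of the SSE involves only $\btheta_j$, the sum is minimized exactly when each summand is, and by Lemma~\ref{lemma_LS} the minimizer of the $j$-th summand is precisely the $\hat{\btheta}_j$ delivered by the range/kernel construction. This proves the stated equivalence.

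For the minimum-norm and uniqueness assertions, I would take any feasible $\bm{\Theta}$, note that each of its columns $\btheta_j$ is then feasible for the $j$-th column problem, invoke Lemma~\ref{lemma_LS} to get $\|\hat{\btheta}_j\|^2_2\leq\|\btheta_j\|^2_2$ with $\hat{\btheta}_j$ the unique minimum-norm solution there, and sum over $j$ using the Frobenius decomposition to obtain $\|\hat{\bm{\Theta}}\|^2_2\leq\|\bm{\Theta}\|^2_2$; uniqueness of $\hat{\bm{\Theta}}$ follows columnwise from uniqueness of each $\hat{\btheta}_j$. The argument is largely bookkeeping, and the one point that needs care — the main obstacle, such as it is — is that $\|\cdot\|_2$ on matrices must be read as the Frobenius norm so that it decomposes as the sum of squared Euclidean column norms; with any other matrix norm the additive reduction to Lemma~\ref{lemma_LS} would break. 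Everything else is immediate from applying the vector result to the $q$ decoupled column systems.
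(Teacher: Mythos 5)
Your proposal is correct and follows essentially the same route as the paper: decompose ${\bf A}\bm{\Theta}={\bf B}$ into the $q$ independent column systems, observe that the trace objective is the sum of the squared column error norms, and apply the single-output result (Lemma~\ref{lemma_LS}) columnwise for both the equivalence and the minimum-norm claim. Your explicit remark that $\|\cdot\|_2$ must be read as the Frobenius norm so that it decomposes additively over columns is a slightly more careful rendering of the paper's statement that ``the sum of the individually minimized norms is also minimum,'' but the argument is the same.
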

\begin{proof}
Equation \eqref{eqn_LinearEqn_matrix} can be re-written as a set of multiple linear
systems of that in \eqref{eqn_LinearEqn} as
\begin{equation}\label{eqn_LinearEqn_matrix_decomposed}
    {\bf A}[\btheta_1,\cdots,\btheta_q] = [{\bf b}_1,\cdots,{\bf b}_q].
\end{equation}
Since the trace of $({\bf A}\bm{\Theta}-{\bf B})^T({\bf A}\bm{\Theta}-{\bf B})$ is equal
to the sum of the squared lengths of the error vectors ${\bf A}\btheta_i-{\bf b}_i$,
$i=1,2,...,q$, the unique solution $\hat{\bm{\Theta}}=({\bf A}^T{\bf A})^{-1}{\bf
A}^T{\bf B}$ in the range space of ${\bf A}$ or that $\hat{\bm{\Theta}}={\bf A}^T({\bf
A}{\bf A}^T)^{-1}{\bf B}$ in the kernel of ${\bf A}$, not only minimizes this sum, but
also minimizes each term in the sum \cite{Duda1}. Moreover, since the column and the row
spaces are independent, the sum of the individually minimized norms is also minimum.
\end{proof}

In summary, a pre-multiplication of ${\bf A}^T$ to both sides of an over-determined
system in \eqref{eqn_LinearEqn} or \eqref{eqn_LinearEqn_matrix} implies a least squares
error minimization process in the primal (column) space of ${\bf A}$ ($\Real^d$).
Moreover, solving the under-determined system in the dual (row) space of ${\bf A}$
($\Real^m$) boils down to seeking a least squares error solution as well. The
relationship between \eqref{eqn_d_space_primal} and \eqref{eqn_m_space_dual} can be
linked by applying the Searle Set of identities (see e.g., \cite{Petersen1}) and the
existence of the limiting case \cite{Albert1} for which $\lim_{\lambda\rightarrow
0}(\lambda{\bf I}+{\bf A}{\bf B})^{-1}{\bf A}=\lim_{\lambda\rightarrow 0}{\bf
A}(\lambda{\bf I}+{\bf B}{\bf A})^{-1}$. Collectively, the inverse of matrix ${\bf A}$
can be written as ${\bf A}^{\dag}$ where in general, such matrix inversion which
satisfies (i) ${\bf A}{\bf A}^{\dag}{\bf A}={\bf A}$, (ii) ${\bf A}^{\dag}{\bf A}{\bf
A}^{\dag}={\bf A}^{\dag}$, (iii) $({\bf A}{\bf A}^{\dag})^*={\bf A}{\bf A}^{\dag}$ and
(iv) $({\bf A}^{\dag}{\bf A})^*={\bf A}^{\dag}{\bf A}$ is known as the
\emph{pseudoinverse} or the \emph{Moore-Penrose inverse}
\cite{Moore1,Bjerhammar1,Penrose1}.\footnote{The pesudoinverse of a rectangular matrix
was first introduced by Moore in \cite{Moore1} and later rediscovered independently by
Bjerhammar \cite{Bjerhammar1} and Penrose \cite{Penrose1}. The superscript symbol $^*$
denotes the conjugate transpose.} The minimum norm property of the solution provides
capability for good learning generalization. This is called implicit
regularization in \cite{ZhangCY1}.\\

This process of solving the algebraic equations under the \emph{kernel-and-range} (KAR)
space based on the \emph{Moore-Penrose inverse} with implicit least squares error seeking
will be exploited to solve the network learning problem in
Section~\ref{sec_net_learning}.


\subsection{Invertible Function}

In this development, we need to invert the network over the activation function for
solution seeking. Such an inversion is performed through function inversion. The inverse
function is defined as follows.

\begin{defn}
Consider a function $f$ which maps $x\in\Real$ to $y\in\Real$, i.e., $y=f(x)$. Then the
inverse function for $f$ is such that $f^{-1}(y)=x$.
\end{defn}

A good example for invertible function is the \emph{logit-sigmoid} pair on the domain
$x\in[0,1)$, which shall be adopted in our experiments.

\subsection{Fully-connected Feedforward Network (FFN)}

In our context, each layer (say, the $k^{th}$-\emph{layer} where $k\in\{1,\cdots,n\}$) of
a fully-connected feedforward neural network is referred to as composing of a multiplying
weight matrix $\bm{W}_k$ and an activation function $\f_k$ which takes in the matrix
product. Fig.~\ref{fig_DN4Layer2} shows a fully-connected feedforward network of $n$
layers. In this representation, the bias term in each layer has been isolated from the
inputs of current layer (or the outputs of the previous layer) where the weight matrix
can be partitioned as $\bm{W}_k=\left[\begin{array}{c} {\bf w}_k^T \\ \textsf{W}_k \\
\end{array}\right]$ with ${\bf w}_k=[w_1,\cdots,w_{h_{k}}]^T\in\Real^{h_{k}}$ being the
weights corresponding to the bias of the previous layer, and
$\textsf{W}_k\in\Real^{h_{k-1}\times h_k}$ being the weights for $h_{k}$ number of hidden
nodes at the $k^{th}$ layer. For simplicity, we assume $f=f_1=\cdots=f_n$ for all hidden
nodes.

\begin{figure}[hhh]
  \begin{center}
  \epsfxsize=12.88cm
  \epsffile[150   160  920   560]{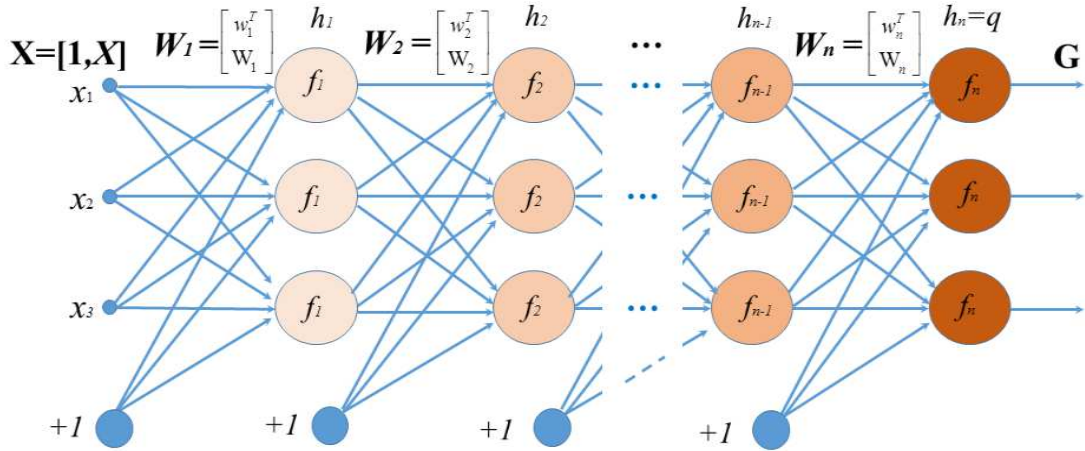}
  \caption{A fully-connected $n$-layer feedforward network. }
  \label{fig_DN4Layer2}
  \end{center}
\end{figure}

\clearpage

\section{Network Learning by Solving the System of Linear Equations} \label{sec_net_learning}

\subsection{Single-Layer Network}

Consider a single-layer FFN given by
\begin{equation}\label{eqn_1layer_net}
    {\bf G} = \f_1({\bf X}\bm{W}_1),
\end{equation}
where $\f_1(\cdot)$ is an activation function\footnote{Although $f=f_1=\cdots=f_n$, we
shall keep the subscript of the activation function to indicate the respective layer for
clarity purpose.} which operates elementwise on its matrix domain, ${\bf
X}\in\Real^{m\times (d+1)}$ is an augmented input (which includes the bias term),
$\bm{W}_1\in\Real^{(d+1)\times h_1}$ is the weight matrix to be estimated, and ${\bf
G}\in\Real^{m\times q}$ is the network output.

Let ${\bf Y}\in\Real^{m\times q}$ be the target output to be learned. Mathematically, the
learning problem is formulated as putting ${\bf G}={\bf Y}$ and then solve for the
adjustable parameters within the model ${\bf G}$. Suppose $\f_1$ satisfies the assumption
given by:
\begin{assumption}\label{assump_11}
  The function $\f_1$ operates elementwise on matrices so that\\
  $\f_1({\bf X}\bm{W}_1)={\bf Y}$ implies that
  there exists a full rank matrix ${\bf A}_1$ such that \\
  ${\bf A}_1\textbf{\textsf{W}}_1=\phi_1({\bf Y})$
  where $\textbf{\textsf{W}}_1$ is an elementwise transformed version of $\bm{W}_1$ and $\phi_1$ is a monotonic and injective function that operates elementwise
  on its matrix domain.
\end{assumption}
This means that if the function $\f_1$ is invertible, then $ \phi_1(\f_1({\bf
X}\bm{W}_1)) = \phi_1({\bf Y}) $ implies that ${\bf X}\bm{W}_1=\g_1({\bf Y})$ where ${\bf
A}_1={\bf X}$ and $\textbf{\textsf{W}}_1=\bm{W}_1$. If $\f_1$ is not invertible, then
there exists a transformation $\phi_1:\Real\mapsto\Real_+$ \footnote{$\Real$ denotes real
numbers, and $\Real_+$ denotes positive real numbers} such that $\phi_1(\f_1({\bf
X}\bm{W}_1))={\bf A}_1\textbf{\textsf{W}}_1$ where ${\bf A}_1=\phi_1(\f_1({\bf X}))$ and
$\textbf{\textsf{W}}_1=\phi_1(\f_1({\bf X}\bm{W}_1))/\phi_1(\f_1({\bf X}))$ (assuming
elementwise operation of the division). Since $\phi_1$ is a monotonic and injective
function, the relative numerical order of elements within ${\bf X}$ and ${\bf Y}$ is
unaltered. This means that the mapping between the data and the target can be fully
recovered when $\phi_1$ is known (a good example is $\phi_1(\cdot)=\exp(\cdot)$). With
this assumption on the property of $f_1$, the following result is obtained.
\begin{thm}
Suppose $\f_1$ satisfies Assumption~\ref{assump_11}. Then learning of the weights in the
single-layer network of \eqref{eqn_1layer_net} admits the solution
\begin{equation}\label{eqn_1layer_soln}
    {\textbf{\textsf{W}}_1} = {\bf A}_1^{\dag}\phi_1({\bf Y})
\end{equation}
in the sense of least squares error approximation to the target ${\bf Y}$ where ${\bf
A}_1^{\dag} = ({\bf A}_1^T{\bf A}_1)^{-1}{\bf A}_1^T$ when ${\bf A}_1^T{\bf A}_1$ has
full rank, or ${\bf A}_1^{\dag} = {\bf A}_1^T({\bf A}_1{\bf A}_1^T)^{-1}$ when ${\bf
A}_1{\bf A}_1^T$ has full rank.
\end{thm}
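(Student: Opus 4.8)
The plan is to reduce the nonlinear learning problem to the linear matrix equation already handled by Lemma~\ref{lemma_LS_matrix} and then simply read off the closed-form solution. First I would set ${\bf G}={\bf Y}$ in \eqref{eqn_1layer_net}, so that the learning requirement reads $\f_1({\bf X}\bm{W}_1)={\bf Y}$. Invoking Assumption~\ref{assump_11}, this equality implies the existence of a full-rank matrix ${\bf A}_1$ and an elementwise-transformed weight matrix $\textbf{\textsf{W}}_1$ for which
\begin{equation*}
    {\bf A}_1\textbf{\textsf{W}}_1 = \phi_1({\bf Y}),
\end{equation*}
with $\phi_1$ monotonic, injective, and acting elementwise. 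This is now exactly an instance of \eqref{eqn_LinearEqn_matrix} under the identification ${\bf A}\leftarrow{\bf A}_1$, $\bm{\Theta}\leftarrow\textbf{\textsf{W}}_1$, ${\bf B}\leftarrow\phi_1({\bf Y})$, with $q$ the output dimension and the row dimension of ${\bf A}_1$ fixed by the sample size $m$.

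Next I would apply Lemma~\ref{lemma_LS_matrix} verbatim. Depending on whether ${\bf A}_1$ yields an over- or under-determined system, solving this equation in the column space or in the row space of ${\bf A}_1$ is equivalent to minimizing the sum of squared errors \eqref{eqn_squared_error_distance_trace} in the transformed variables, and the unique minimum-norm solution is $\textbf{\textsf{W}}_1=({\bf A}_1^T{\bf A}_1)^{-1}{\bf A}_1^T\phi_1({\bf Y})$ when ${\bf A}_1^T{\bf A}_1$ has full rank, or $\textbf{\textsf{W}}_1={\bf A}_1^T({\bf A}_1{\bf A}_1^T)^{-1}\phi_1({\bf Y})$ when ${\bf A}_1{\bf A}_1^T$ has full rank. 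Collapsing the two cases into the Moore--Penrose notation ${\bf A}_1^{\dag}$ introduced after Lemma~\ref{lemma_LS_matrix} gives precisely \eqref{eqn_1layer_soln}.

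The only delicate point — and what I expect to be the main obstacle — is justifying the phrase ``in the sense of least squares error approximation to the target ${\bf Y}$,'' since the residual actually being minimized lives in the $\phi_1$-transformed domain, not in the raw output domain. I would dispatch this by exploiting the order-preserving property of $\phi_1$ asserted in Assumption~\ref{assump_11}: because $\phi_1$ is monotonic and injective it is a bijection onto its image, so $\phi_1({\bf Y})$ carries exactly the same information as ${\bf Y}$ (the data-to-target map is fully recoverable once $\phi_1$ is known), and minimizing \eqref{eqn_squared_error_distance_trace} in the transformed coordinates is the appropriate surrogate for least-squares fitting of ${\bf Y}$. In the invertible-activation case this is not even a surrogate: there ${\bf A}_1={\bf X}$, $\textbf{\textsf{W}}_1=\bm{W}_1$, and $\phi_1=\g_1$, so \eqref{eqn_1layer_soln} is literally the least-squares fit of the pre-activation signal $\g_1({\bf Y})$ by ${\bf X}\bm{W}_1$, which completes the argument.
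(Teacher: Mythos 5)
Your proposal is correct and follows essentially the same route as the paper: reduce the network equation to the linear matrix system ${\bf A}_1\textbf{\textsf{W}}_1=\phi_1({\bf Y})$ and read off the primal/dual pseudoinverse solutions, the only difference being that you cite Lemma~\ref{lemma_LS_matrix} where the paper re-derives the normal-equation and row-space arguments inline. Your closing remark about the least-squares residual living in the $\phi_1$-transformed domain rather than the raw output domain is a point the paper's own proof passes over silently, and is a worthwhile clarification rather than a deviation.
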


\begin{proof}
For an over-determined system learning a target output ${\bf Y}\in\Real^{m\times q}$, we
can put ${\bf G}={\bf Y}$ and learn the system by manipulating the kernel and the range
space using
\begin{equation}  \label{eqn_1layer_net_soln}
    \phi_1({\bf Y}) = {\bf A}_1\textbf{\textsf{W}}_1,
\end{equation}
where the transformed weight matrix can be solved analytically as follows:
\begin{eqnarray} \label{eqn_net_primal_soln}
     {\bf A}_1^T \phi_1({\bf Y}) &=& {\bf A}_1^T{\bf A}_1{\textbf{\textsf{W}}}_1 \nonumber\\
    \Rightarrow\hspace{5mm} ({\bf A}_1^T{\bf A}_1)^{-1}{\bf A}_1^T \phi_1({\bf Y}) &=& {\textbf{\textsf{W}}_1} .
\end{eqnarray}
This is the well-known solution to least squares error approximation.

For under-determined systems, the problem can be solved by putting
\begin{equation}\label{eqn_net_dual_soln_pre}
    {\textbf{\textsf{W}}}_1={\bf A}_1^T{\bf V}
\end{equation}
where ${\bf V}\in\Real^{m\times q}$ is obtained from
\begin{eqnarray}
    \phi_1({\bf Y}) &=& {\bf A}_1{\bf A}_1^T{\bf V} \nonumber\\
    \Rightarrow\hspace{5mm} ({\bf A}_1{\bf A}_1^T)^{-1} \phi_1({\bf Y}) &=& {\bf V} .
    \label{eqn_1layer_net_soln_dual}
\end{eqnarray}
Putting \eqref{eqn_1layer_net_soln_dual} into \eqref{eqn_net_dual_soln_pre} gives
\begin{equation}\label{eqn_net_dual_soln}
    {\textbf{\textsf{W}}_1}={\bf A}_1^T({\bf A}_1{\bf A}_1^T)^{-1} \phi_1({\bf Y}).
\end{equation}
A direct substitution of \eqref{eqn_net_dual_soln} into the \emph{trace} of $(\phi_1({\bf
Y}) - {\bf A}_1{\textbf{\textsf{W}}}_1)^T(\phi_1({\bf Y}) - {\bf
A}_1{\textbf{\textsf{W}}}_1)$ shows that the solution also minimizes the sum of squared
errors. This completes the proof.
\end{proof}

\noindent{\bf Remark 1: } Collectively, ${\bf A}_1^{\dag}$, which is obtained by either
\eqref{eqn_net_primal_soln} or \eqref{eqn_net_dual_soln}, can be taken as the
Moore-Penrose inverse of ${\bf A}_1$ \cite{Albert1}. The transformation of ${\bf
Y}=\f_1({\bf X}\bm{W}_1)$ into \eqref{eqn_1layer_net_soln} makes the system of nonlinear
equations into a linear one. This shows that learning of an FFN of single connection
layer \eqref{eqn_1layer_net} boils down to linear least squares estimation. Since both
$\textbf{\textsf{W}}_1$ (transformed by $\phi_1$) and $\bm{W}_1$ (the non-transformed
one) refer to the same set of weights of the network, we shall not distinguish between
them and use only $\bm{W}_k$, $k=1,\cdots,n$ for subsequent development. \Box

\subsection{Two-Layer Network}

Next, consider a two-layer network given by
\begin{equation}\label{eqn_2layer_net}
    {\bf G} = \f_2([{\bf 1},\f_1({\bf X}\bm{W}_1)]\bm{W}_2),
\end{equation}
where $\f_1$ and $\f_2$ are activation functions which operate elementwise upon their
matrix domain, ${\bf X}\in\Real^{m\times (d+1)}$ is an augmented input,
$\bm{W}_1\in\Real^{(d+1)\times h_1}$ and $\bm{W}_2\in\Real^{(h_1+1)\times q}$ are the
unknown network weights,  ${\bf 1}=[1,...,1]^T\in\Real^{m\times 1}$ is the bias vector,
and ${\bf G}\in\Real^{m\times q}$ denotes the network output. The target to be learned is
${\bf Y}\in\Real^{m\times q}$.

The following assumption is needed for our learning formulation when the network output
${\bf G}=\f_2([{\bf 1},\f_1({\bf X}\bm{W}_1)]\bm{W}_2)$ is equated to the learning target
${\bf Y}$ in order to solve for the weight parameters.
\begin{assumption}\label{assump_21}
  The functions $\f_1$ and $\f_2$ operate elementwise on matrices so that
  $\f_2([{\bf 1},\f_1({\bf X}\bm{W}_1)]\bm{W}_2)={\bf Y}$ implies that
  for known $\bm{W}_1$,
  there exists a full rank matrix ${\bf A}_2$
  such that ${\bf A}_2\bm{W}_2=\phi_2({\bf Y})$
  where ${\bf A}_2=[{\bf 1},\f_1({\bf X}\bm{W}_1)]$
  and $\phi_2$ is a monotonic and injective function that operates elementwise on
  its matrix domain.
\end{assumption}
This means that $\f_2$ can either be inverted (which results in
$\phi_2(\cdot)=\g_2(\cdot)$) or transformed (in case of non-invertible) giving $[{\bf
1},\f_1({\bf X}\bm{W}_1)]\bm{W}_2=\phi_2({\bf Y})$ where $f_1$ provides sufficient rank
conditioning regarding the matrix $[{\bf 1},\f_1({\bf X}\bm{W}_1)]$ for solving the
linear equation using either the kernel or the range space. By separately considering the
weights corresponding to the output bias term, we can partition
$\bm{W}_2=\left[\begin{array}{c}
{\bf w}_2^T \\ \textsf{W}_2 \\
\end{array}\right]$ where ${\bf w}_2\in\Real^{q\times 1}$ and $\textsf{W}_2\in\Real^{h_1\times q}$.
Based on these partitioned weights, we have
\begin{eqnarray}
  \phi_2({\bf Y})
        &=& {\bf 1}\cdot{\bf w}_2^T + \f_1({\bf X}\bm{W}_1)\textsf{W}_2 \nonumber\\
  \Rightarrow\hspace{5mm}
  \left[\phi_2({\bf Y}) - {\bf 1}\cdot{\bf w}_2^T\right] \textsf{W}_2^{\dag}
        &=& \f_1({\bf X}\bm{W}_1), \label{eqn_2layer_net_dual}
\end{eqnarray}
where the symbol $\textsf{W}_2^{\dag}$ denotes the Moore-Penrose inverse of the weights
without bias, $\textsf{W}_2$.

Consider Assumption~\ref{assump_11} and suppose $\bm{W}_2$ is known, then the hidden
layer weights $\bm{W}_1$ can be solved by another inversion or transformation of equation
\eqref{eqn_2layer_net_dual} with respect to $f_1$:
\begin{eqnarray}
  \phi_1\left( \left[\phi_2({\bf Y}) - {\bf 1}\cdot{\bf w}_2^T\right] \textsf{W}_2^{\dag} \right) &=&  {\bf X}\bm{W}_1 \nonumber\\
  \Rightarrow\hspace{5mm}
  {\bf X}^{\dag}\phi_1\left( \left[\phi_2({\bf Y}) - {\bf 1}\cdot{\bf w}_2^T\right] \textsf{W}_2^{\dag} \right)
  &=&  {\bm{W}_1} .
  \label{eqn_2layer_net_W}
\end{eqnarray}

After having $\bm{W}_1$ computed, it can be used to re-compute $\bm{W}_2$ as follows:
\begin{eqnarray}
    \phi_2({\bf Y}) &=& [{\bf 1},\f_1({\bf X}{\bm{W}_1})]\bm{W}_2 \nonumber\\
    \Rightarrow\hspace{5mm}
    \left[{\bf 1},\f_1({\bf X}{\bm{W}_1})\right]^{\dag} \phi_2({\bf Y})
    &=& {\bm{W}_2} \label{eqn_2layer_net_fulltheta}
\end{eqnarray}
where $[{\bf 1},\f({\bf X}{\bm{W}_1})]^{\dag}$ denotes the Moore-Penrose
inverse of $[{\bf 1},\f({\bf X}{\bm{W}_1})]$. \\



\begin{thm} \label{thm_double}
Suppose $f_1$ and $f_2$ satisfy Assumptions~\ref{assump_11}--\ref{assump_21}. Then,
learning of the fully-connected two-layer network \eqref{eqn_2layer_net} in terms of the
least squares error approximation to target ${\bf Y}\in\Real^{m\times q}$ boils down to
solving the cross-coupling equations \textup{
\begin{eqnarray}
  \bm{W}_1 &=& {\bf X}^{\dag}\phi_1\left(\left[\phi_2({\bf Y}) - {\bf 1}\cdot{\bf w}_2^T\right]\textsf{W}_2^{\dag} \right),
  \label{eqn_soln_2layer_W}\\
   \textrm{and}\ \ \bm{W}_2 =\left[\begin{array}{c} {\bf w}_2^T \\ {\textsf{W}_2} \\
\end{array}\right] &=&
        \left[{\bf 1},\f_1({\bf X}\bm{W}_1)\right]^{\dag} \phi_2({\bf Y}).
    \label{eqn_soln_2layer_Theta}
\end{eqnarray}}
\end{thm}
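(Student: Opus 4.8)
The plan is to chain together the two single-layer arguments already established, alternately treating $\bm{W}_1$ and $\bm{W}_2$ as the unknown while the other is held fixed. First I would equate the network output to the target, ${\bf G}={\bf Y}$, i.e. $\f_2([{\bf 1},\f_1({\bf X}\bm{W}_1)]\bm{W}_2)={\bf Y}$. Invoking Assumption~\ref{assump_21}, either $\f_2$ is invertible (so $\phi_2=\g_2$) or it is transformed, and in both cases one obtains the linear-in-$\bm{W}_2$ equation $[{\bf 1},\f_1({\bf X}\bm{W}_1)]\bm{W}_2=\phi_2({\bf Y})$ with the full-rank coefficient matrix ${\bf A}_2=[{\bf 1},\f_1({\bf X}\bm{W}_1)]$. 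Applying Theorem~1 (with ${\bf A}_1$ replaced by ${\bf A}_2$ and $\phi_1$ by $\phi_2$) gives immediately the least-squares / minimum-norm solution $\bm{W}_2={\bf A}_2^{\dag}\phi_2({\bf Y})=[{\bf 1},\f_1({\bf X}\bm{W}_1)]^{\dag}\phi_2({\bf Y})$, which is precisely \eqref{eqn_soln_2layer_Theta}.

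For the second equation I would go the other direction: partition $\bm{W}_2$ into its bias row ${\bf w}_2^T$ and the remaining block $\textsf{W}_2$, isolate the bias contribution ${\bf 1}\cdot{\bf w}_2^T$, and rewrite the same linear relation as $[\phi_2({\bf Y})-{\bf 1}\cdot{\bf w}_2^T]\textsf{W}_2^{\dag}=\f_1({\bf X}\bm{W}_1)$ --- this is exactly the KAR manipulation of \eqref{eqn_2layer_net_dual}, where multiplying on the right by $\textsf{W}_2^{\dag}$ is the row-space (kernel) step covered by Lemma~\ref{lemma_LS_matrix}. Now treating $\bm{W}_2$ (hence ${\bf w}_2$ and $\textsf{W}_2$) as known, apply Assumption~\ref{assump_11} to $\f_1$ to get ${\bf X}\bm{W}_1=\phi_1\!\left([\phi_2({\bf Y})-{\bf 1}\cdot{\bf w}_2^T]\textsf{W}_2^{\dag}\right)$, and solve this linear system for $\bm{W}_1$ via the Moore--Penrose inverse of ${\bf X}$, yielding $\bm{W}_1={\bf X}^{\dag}\phi_1\!\left([\phi_2({\bf Y})-{\bf 1}\cdot{\bf w}_2^T]\textsf{W}_2^{\dag}\right)$, which is \eqref{eqn_soln_2layer_W}. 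At each inversion the solution selected is the unique minimum-norm least-squares solution guaranteed by Lemmas~\ref{lemma_LS}--\ref{lemma_LS_matrix}, so the phrase ``in terms of least squares error approximation'' is inherited stepwise; a direct substitution of \eqref{eqn_net_dual_soln}-type formulae into the relevant trace, as in the proof of Theorem~1, confirms the SSE-minimizing property of each stage.

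Finally I would assemble the two relations and observe that they must hold \emph{simultaneously}: \eqref{eqn_soln_2layer_Theta} expresses $\bm{W}_2$ through ${\bf A}_2=[{\bf 1},\f_1({\bf X}\bm{W}_1)]$, while \eqref{eqn_soln_2layer_W} expresses $\bm{W}_1$ through ${\bf w}_2$ and $\textsf{W}_2$; together they constitute the stated cross-coupling system, completing the proof. The main obstacle, and the point I would be most careful to phrase correctly, is precisely this coupling: neither equation is a self-contained closed form, so the theorem is really a \emph{characterization} (a consistency / fixed-point condition) rather than an explicit formula --- a genuine solution needs either a fixed-point iteration or, as the paper does next, prefixing one block (e.g. a random $\bm{W}_1$) to decouple the pair. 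A secondary subtlety worth a remark is that composing two separately minimum-norm least-squares steps does not by itself produce a \emph{jointly} optimal $(\bm{W}_1,\bm{W}_2)$ over all feasible pairs; the optimality asserted is the layerwise one delivered by the KAR manipulations, which is all that Theorem~1 and Lemma~\ref{lemma_LS_matrix} support.
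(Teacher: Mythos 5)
Your proposal is correct and follows essentially the same route as the paper: the paper's proof simply points back to the derivations in \eqref{eqn_2layer_net_dual}--\eqref{eqn_2layer_net_W} for $\bm{W}_1$ and \eqref{eqn_2layer_net_fulltheta} for $\bm{W}_2$, which are exactly the two KAR manipulations you carry out. Your closing remarks on the cross-coupling being a consistency condition rather than a closed form, and on layerwise versus joint optimality, accurately reflect what the paper itself concedes in Remark~2.
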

\begin{proof}
Equation \eqref{eqn_soln_2layer_W} is obtained following the derivation steps in
\eqref{eqn_2layer_net_dual} through \eqref{eqn_2layer_net_W}. Equation
\eqref{eqn_soln_2layer_Theta} has been derived in \eqref{eqn_2layer_net_fulltheta}.
\end{proof}

\noindent{\bf Remark 2: } This result shows that the two-layer network learning problem
boils down to solving a system of equations with cross-coupling weights, i.e., $\bm{W}_1$
and $\bm{W}_2$ being inter-dependent. This implies that the system is recursive unless
one of the two weight matrices is prefixed. We shall observe in
Section~\ref{sec_representation} regarding the network representation capability when the
hidden weights are prefixed randomly. \Box

\subsection{Three-Layer Network and Beyond}

Consider a three-layer network given by
\begin{equation}\label{eqn_3layer_net}
    {\bf G} = \f_3(\left[{\bf 1},\f_2([{\bf 1},\f_1({\bf X}\bm{W}_1)]\bm{W}_2)\right]\bm{W}_3)
\end{equation}
where ${\bf X}\in\Real^{m\times (d+1)}$, $\bm{W}_1\in\Real^{(d+1)\times h_1}$,
$\bm{W}_2\in\Real^{(h_1+1)\times h_2}$, $\bm{W}_3\in\Real^{(h_2+1)\times q}$, ${\bf
1}=[1,...,1]^T\in\Real^{m\times 1}$, and output ${\bf G}\in\Real^{m\times q}$ with target
${\bf Y}\in\Real^{m\times q}$. Following the assumption in the two-layer case, we have
\begin{assumption}\label{assump_31}
  The functions $\f_i$, $i=1,2,3$ operate elementwise on matrices so that
  $\f_3(\left[{\bf 1},\f_2([{\bf 1},\f_1({\bf X}\bm{W}_1)]\bm{W}_2)\right]\bm{W}_3)={\bf Y}$
  implies that for known $\bm{W}_{j}$, $j=1,\cdots,k-1$,
  there is a full rank matrix ${\bf A}_k$ such that ${\bf A}_k\bm{W}_k={\bf B}_k$
  for each $k=2,3$ where\\
  \indent ${\bf A}_2=[{\bf 1},\f_1({\bf X}\bm{W}_1)]$, ${\bf A}_3=[{\bf 1},
  \f_2([{\bf 1},\f_1({\bf X}\bm{W}_1)]){\bm{W}_2}]$,\\
  \indent ${\bf B}_2=\phi_2(\phi_3({\bf Y}) \bm{W}_3^{\dag})$, ${\bf B}_3=\phi_3({\bf Y})$,\\
  and $\phi_k$, $k=2,3$ are monotonic and injective functions
  that operate elementwise on their matrix domain.
\end{assumption}
This implies that $\f_1$ in ${\bf A}_2$ and $\f_2$ in ${\bf A}_3$ are functions that
provide rank sufficiency for solving the system of linear equations. Then, similar to the
two-layer network, the weights corresponding to the output bias term can be separated out
as
$\bm{W}_3=\left[\begin{array}{c} {\bf w}_3^T \\ \textsf{W}_3 \\
\end{array}\right]$ where ${\bf w}_3\in\Real^{q\times 1}$ and $\textsf{W}_3\in\Real^{h_2\times q}$.
By replacing ${\bf G}$ with ${\bf Y}$ and taking the transformation or inverse function
of $\f_3$ (i.e., $\phi_3$) to both sides of \eqref{eqn_3layer_net} after moving the bias
weights over to the left hand side, we have
\begin{eqnarray}
  \phi_3({\bf Y}) - {\bf 1}\cdot{\bf w}_3^T &=& \f_2([{\bf 1},\f_1({\bf X}\bm{W}_1)]\cdot\bm{W}_2)\textsf{W}_3
  \nonumber\\ \Rightarrow\hspace{5mm}
  \left[\phi_3({\bf Y}) - {\bf 1}\cdot{\bf w}_3^T\right]\textsf{W}_3^{\dag} &=&
      \f_2([{\bf 1},\f_1({\bf X}\bm{W}_1)]\cdot\bm{W}_2) \nonumber\\
      \Rightarrow\hspace{5mm}
  \phi_2\left(\left[\phi_3({\bf Y}) - {\bf 1}\cdot{\bf w}_3^T\right]\textsf{W}_3^{\dag}\right) &=&
      [{\bf 1},\f_1({\bf X}\bm{W}_1)]\cdot\bm{W}_2 .
  \label{eqn_omega_expr}
\end{eqnarray}
Consider again the case of separating the bias weights in the middle layer:
\begin{eqnarray}
  \phi_2\left(\left[\phi_3({\bf Y}) - {\bf 1}\cdot{\bf w}_3^T\right]\textsf{W}_3^{\dag}\right)
      - {\bf 1}\cdot{\bf w}_2^T &=&
      \f_1({\bf X}\bm{W}_1)\textsf{W}_2 \nonumber\\
      \Rightarrow\hspace{5mm}
  \left[ \phi_2\left(\left[\phi_3({\bf Y}) - {\bf 1}\cdot{\bf w}_3^T\right]\textsf{W}_3^{\dag}\right)
      - {\bf 1}\cdot{\bf w}_2^T \right]
      \textsf{W}_2^{\dag} &=&  \f_1({\bf X}\bm{W}_1) \nonumber\\
  \phi_1\left\{\left[ \phi_2\left(\left[\phi_3({\bf Y}) - {\bf 1}\cdot{\bf w}_3^T\right]\textsf{W}_3^{\dag}\right)
      - {\bf 1}\cdot{\bf w}_2^T \right]
      \textsf{W}_2^{\dag}\right\} &=&  {\bf X}\bm{W}_1 \nonumber\\
      \Rightarrow\hspace{5mm}
  {\bf X}^{\dag}\phi_1\left\{\left[ \phi_2\left(\left[\phi_3({\bf Y}) - {\bf 1}\cdot{\bf w}_3^T\right]\textsf{W}_3^{\dag}\right)
      - {\bf 1}\cdot{\bf w}_2^T \right]
      \textsf{W}_2^{\dag}\right\}  &=& {\bm{W}_1}.
  \label{eqn_var_omega}
\end{eqnarray}
By initializing $\bm{W}_2$ and $\bm{W}_3$, the hidden weights $\bm{W}_1$ can be estimated
using \eqref{eqn_var_omega}. After $\bm{W}_1$ has been estimated, $\bm{W}_2$ can be
computed based on \eqref{eqn_omega_expr} as follows:
\begin{eqnarray}
  \phi_2\left(\left[\phi_3({\bf Y}) - {\bf 1}\cdot{\bf w}_3^T\right]\textsf{W}_3^{\dag}\right) &=&
      [{\bf 1},\f_1({\bf X}{\bm{W}_1})]\cdot\bm{W}_2 \nonumber\\
  \left[{\bf 1},\f_1({\bf X}{\bm{W}_1})\right]^{\dag}\phi_2\left(\left[\phi_3({\bf Y}) - {\bf 1}\cdot{\bf w}_3^T\right]\textsf{W}_3^{\dag}\right)
      &=& {\bm{W}_2}
  \label{eqn_omega}
\end{eqnarray}
Once the weight matrices $\bm{W}_1$ and $\bm{W}_2$ are estimated, the output weights can
be determined as
\begin{equation}\label{eqn_output_weight_Nov11}
    {\bm{W}_3} = \left[{\bf 1},\f_2([{\bf 1},\f_1({\bf X}{\bm{W}_1})]
    \cdot{\bm{W}_2})\right]^{\dag}\phi_3({\bf
    Y}).
\end{equation}
The solution given by \eqref{eqn_output_weight_Nov11} boils down to least squares error
approximation because according to Lemma~\ref{lemma_LS}, multiplying the data matrix to
both sides of \eqref{eqn_3layer_net} is equivalent to projecting the original equation
onto the normal equation of sum of squared errors minimization \cite{StrangG1}. With this
observation on the three-layer network, we can generalize it to $n$-layer networks based
on the following assumption.

\newpage

\begin{assumption}\label{assump_n1}
  The functions $\f_i$, $i=1,\cdots,n$ operate elementwise on matrices so that
  $\f_n(\cdots\left[{\bf 1},\f_2([{\bf 1},\f_1({\bf X}\bm{W}_1)]\bm{W}_2)\right]\cdots\bm{W}_n)={\bf Y}$
  implies that for known $\bm{W}_{j}$, $j=1,\cdots,k-1$,
  there is a full rank matrix ${\bf A}_k$ such that ${\bf A}_k\bm{W}_k={\bf B}_k$
  for each $k=2,\cdots,n$ where\\
  \indent ${\bf A}_2=[{\bf 1},\f_1({\bf X}\bm{W}_1)]$, $\cdots$,
    ${\bf A}_n=[{\bf 1},f_{n-1}(\cdots [{\bf 1},f_2([{\bf 1},\f_1({\bf X}\bm{W}_1)]
    \bm{W}_2)]\cdots\bm{W}_{n-1})]$, \\
  \indent ${\scriptsize {\bf B}_2=
        \phi_1\left(
        \left[\phi_2(\cdots
        \left[\phi_{n-1}(
        \left[\phi_n({\bf Y}) - {\bf 1}\cdot{\bf w}_n^T\right]
        \textsf{\small W}_n^{\dag}) - {\bf 1}\cdot{\bf w}_{n-1}^T\right]
        \textsf{\small W}_{n-1}^{\dag}\cdots) - {\bf 1}\cdot{\bf w}_2^T\right]
        \textsf{\small W}_2^{\dag} \right)}$, \\
  \indent $\cdots$, ${\bf B}_n=\phi_n({\bf Y})$, \\
  and $\phi_k$, $k=2,\cdots,n$ are monotonic and injective functions
  that operate elementwise on their matrix domain.
\end{assumption}


\begin{thm} \label{thm_multiple}
Suppose $f_1\cdots f_n$ satisfy Assumptions~\ref{assump_11}--\ref{assump_n1}. Consider
the fully-connected $n$-layer network given by
\begin{equation}\label{eqn_Nlayer_net}
    {\bf G} = \f_n\left([{\bf 1},\f_{n-1}(\cdots\left[{\bf 1},\f_2([{\bf 1},\f_1({\bf
    X}\bm{W}_1)]\cdot\bm{W}_2)\right]\cdots\bm{W}_{n-1})]\cdot\bm{W}_n \right),
\end{equation}
where ${\bf X}\in\Real^{m\times (d+1)}$, $\bm{W}_1\in\Real^{(d+1)\times h_1}$,
$\bm{W}_2\in\Real^{(h_1+1)\times h_2}$, $\cdots$,
$\bm{W}_{n-1}\in\Real^{(h_{n-2}+1)\times h_{n-1}}$,
$\bm{W}_{n}\in\Real^{(h_{n-1}+1)\times q}$, ${\bf 1}=[1,...,1]^T\in\Real^{m\times 1}$,
and output ${\bf G}\in\Real^{m\times q}$ with target ${\bf Y}\in\Real^{m\times q}$. Then,
learning of the weights in \eqref{eqn_Nlayer_net} in terms of the least squares error
approximation boils down to solving the following set of cross-coupling equations
 {\footnotesize
\begin{equation} \label{eqn_soln_Nlayer_Theta}
\hspace{-3mm}\begin{array}{ccl}
  \bm{W}_1 &=& {\bf X}^{\dag}
        \phi_1\left(
        \left[\phi_2(\cdots
        \left[\phi_{n-1}(
        \left[\phi_n({\bf Y}) - {\bf 1}\cdot{\bf w}_n^T\right]
        \textsf{W}_n^{\dag}) - {\bf 1}\cdot{\bf w}_{n-1}^T\right]
        \textsf{W}_{n-1}^{\dag}\cdots) - {\bf 1}\cdot{\bf w}_2^T\right]
        \textsf{W}_2^{\dag} \right),   \\
  \bm{W}_2 &=& \left[{\bf 1},\f_1({\bf X}\bm{W}_1)\right]^{\dag}
        \phi_{2}\left(
        \left[\phi_{3}(\cdots
        \left[\phi_{n-1}(
        \left[\phi_n({\bf Y}) - {\bf 1}\cdot{\bf w}_n^T\right]
        \textsf{W}_n^{\dag}) - {\bf 1}\cdot{\bf w}_{n-1}^T\right]
        \textsf{W}_{n-1}^{\dag}\cdots \right.\right. \\
        && \left.\left. \hspace{106mm}
        ) - {\bf 1}\cdot{\bf w}_3^T\right] \textsf{W}_3^{\dag} \right),  \\
        &\vdots&  \\
   \bm{W}_{n-1}
        &=& \left[{\bf 1},f_{n-2}([\cdots f_2([{\bf 1},\f_1({\bf X}\bm{W}_1)]\bm{W}_2)]\cdots\bm{W}_{n-2})\right]^{\dag}
        \phi_{n-1}\left(\left[\phi_n({\bf Y}) - {\bf 1}\cdot{\bf w}_n^T\right]\textsf{W}_n^{\dag}\right),  \\
   \bm{W}_n &=& \left[\begin{array}{c} {\bf w}_n^T \\ \textsf{W}_n \\
        \end{array}\right]
        = \left[{\bf 1},f_{n-1}(\cdots [{\bf 1},f_2([{\bf 1},\f_1({\bf X}\bm{W}_1)]\bm{W}_2)]
        \cdots\bm{W}_{n-1})\right]^{\dag} \phi_n({\bf Y}).
\end{array}
\end{equation} }
\end{thm}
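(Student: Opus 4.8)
The plan is to argue by induction on the number of layers $n$, taking the single-layer result, Theorem~\ref{thm_double}, and the explicit three-layer derivation \eqref{eqn_omega_expr}--\eqref{eqn_output_weight_Nov11} as the base cases, and in each case splitting the argument into a \emph{downward peeling} pass that produces $\bm{W}_1$ followed by an \emph{upward reconstruction} pass that produces $\bm{W}_2,\dots,\bm{W}_n$. The inductive step is largely cosmetic: an $n$-layer network \eqref{eqn_Nlayer_net} is an $(n-1)$-layer network whose input matrix $\mathbf{X}$ has been replaced by $[\mathbf{1},\f_1(\mathbf{X}\bm{W}_1)]$, so once the outermost activation has been peeled off, the identities already established for $n-1$ layers apply verbatim.

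For the downward pass I would set $\mathbf{G}=\mathbf{Y}$ and repeatedly invoke Assumption~\ref{assump_n1}: apply $\phi_n$ to both sides, subtract the isolated output-bias contribution $\mathbf{1}\cdot\mathbf{w}_n^T$, and right-multiply by $\textsf{W}_n^{\dag}$ to strip off $\bm{W}_n$, then repeat with $\phi_{n-1},\textsf{W}_{n-1}^{\dag}$, and so on down to $\phi_2,\textsf{W}_2^{\dag}$. At each stage the full-rank clause of Assumption~\ref{assump_n1} on the matrix $\mathbf{A}_k$ (equivalently, the rank sufficiency supplied by $\f_{k-1}$) is what lets $\textsf{W}_k^{\dag}$ act as a genuine one-sided inverse on the relevant subspace, so the step is reversible in the least-squares sense; the monotone--injective clause guarantees each $\phi_k$ is applied only on its domain, and when the activation is invertible (as in the logit--sigmoid pair) $\phi_k=\f_k^{-1}$. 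After all peeling one is left with $\mathbf{X}\bm{W}_1$ equal to the nested expression in the first line of \eqref{eqn_soln_Nlayer_Theta}; left-multiplying by $\mathbf{X}^{\dag}$ isolates $\bm{W}_1$, and Lemma~\ref{lemma_LS_matrix} certifies it as the unique minimum-norm least-squares solution of the induced system $\mathbf{X}\bm{W}_1=\mathbf{B}_2$.

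For the upward pass, with $\bm{W}_1$ and the prefixed $\bm{W}_2,\dots,\bm{W}_n$ now available, I would re-solve $\mathbf{A}_2\bm{W}_2=\mathbf{B}_2'$ with $\mathbf{A}_2=[\mathbf{1},\f_1(\mathbf{X}\bm{W}_1)]$ and $\mathbf{B}_2'$ the partially-peeled target read off \eqref{eqn_soln_Nlayer_Theta}: left-multiplication by $\mathbf{A}_2^{\dag}$ gives the stated formula, again a termwise least-squares / minimum-norm solution by Lemma~\ref{lemma_LS_matrix}. Iterating for $\bm{W}_3,\dots,\bm{W}_{n-1}$ and finally $\bm{W}_n=[\mathbf{1},\f_{n-1}(\cdots)]^{\dag}\phi_n(\mathbf{Y})$ reproduces every line of \eqref{eqn_soln_Nlayer_Theta}, and the least-squares interpretation of each output-style step follows as in \eqref{eqn_output_weight_Nov11}. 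The main obstacle I anticipate is purely one of bookkeeping: keeping the deeply nested composition of the $\phi_k$'s, the successive bias subtractions $\mathbf{1}\cdot\mathbf{w}_k^T$, and the right-multiplications by $\textsf{W}_k^{\dag}$ aligned so that the peeled expression matches \eqref{eqn_soln_Nlayer_Theta} exactly; the one point requiring genuine care is verifying at each peeling step that the pseudoinverse really recovers the pre-image on its range, which is precisely where the rank-sufficiency and injectivity hypotheses of Assumption~\ref{assump_n1} are used.
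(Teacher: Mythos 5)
Your proposal is correct and follows essentially the same route as the paper: the authors likewise prove Theorem~\ref{thm_multiple} by generalizing the three-layer peeling in \eqref{eqn_omega_expr}--\eqref{eqn_output_weight_Nov11}, first inverting/transforming all activations outward-in to isolate $\bm{W}_1$, then reconstructing $\bm{W}_2,\dots,\bm{W}_n$ layer by layer with one fewer activation undone at each stage, with Lemmas~\ref{lemma_LS}--\ref{lemma_LS_matrix} supplying the least-squares interpretation of each pseudoinverse step. Your explicit induction-on-$n$ scaffolding (viewing the $n$-layer network as an $(n-1)$-layer network with input $[\mathbf{1},\f_1(\mathbf{X}\bm{W}_1)]$) is a slightly more formal packaging of the paper's informal ``such construction follows through till $\bm{W}_n$,'' not a different argument.
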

\begin{proof}
The proof follows from the derivation of the three-layer network by transforming or
inverting all the activation functions of every layer to express $\bm{W}_1$ in terms of
the sub-matrices $\textsf{W}_2$,...$\textsf{W}_n$ and vectors ${\bf w}_2$,...,${\bf
w}_n$. Then $\bm{W}_2$ is constructed based on $\bm{W}_1$,
$\textsf{W}_3$,...$\textsf{W}_n$ and ${\bf w}_3$,...,${\bf w}_n$ with activation one
layer near the inputs untransformed or un-inverted. The estimation of each increment of
the weight layer comes with one layer near the inputs untransformed or un-inverted. Such
construction follows through till $\bm{W}_n$. Hence the result.
\end{proof}

With this result, we propose a single pass algorithm for multilayer network learning
based on the Kernel-And-Range (KAR) space as follows.
\begin{center}
\begin{tabular}{lcl}
  \textbf{Algorithm} &:& {\texttt{KARnet}} \\
  \hline
  \textbf{Initialization} &:& Assign random weights to $\textsf{W}_2,\cdots,\textsf{W}_n$ and
        ${\bf w}_2$,...,${\bf w}_n$. \\
  \textbf{Learning} &:& Compute the network weights $\bm{W}_1$,...$\bm{W}_n$ sequentially using
    \eqref{eqn_soln_Nlayer_Theta}
    \\ && with $f=f_1=\cdots=f_n$ using `\texttt{logit}' activation and
    \\ && $\phi=\phi_1=\cdots=\phi_n$ using `\texttt{sigmoid}' transformation. \\
  \textbf{Net output} &:& Compute the network output ${\bf G}$ according to
    \eqref{eqn_Nlayer_net}.\\
  \hline \\
\end{tabular}
\end{center}

\noindent{\bf Remark 3: } Theorem~\ref{thm_multiple} shows that the FFN network learning
problem is a cross-coupling one, with weights in each layer depending on each and every
other layers. For the proposed learning, the learning outcome is dependent on the
initialization of the weight terms $\textsf{W}_2,\cdots,\textsf{W}_n$ and ${\bf
w}_2$,...,${\bf w}_n$. For simplicity in terms of implementation, we shall investigate
into a random initialization of these weight terms even though the algorithm does not
impose such a limitation. The representation of such a randomized network shall be
investigated in the sequel. \Box


\subsection{Network Representation} \label{sec_representation}

Here, we show the network representation capability before the experiments. In our
notations, the function $f=f_1=\cdots=f_n$ is operated elementwise on the matrix domain.


\begin{thm} (\textbf{Two-layer Network}) \label{thm_twolayer_representation}
Given $m$ distinct data samples of multivariate inputs with univariate output and suppose
$f_1({\bf X}\bm{W}_1)$ has full rank. Then there exists a fully-connected two-layer
network of linear output with at least $m$ adjustable weights that can represent these
data samples.
\end{thm}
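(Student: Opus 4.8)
The plan is to reduce the two-layer representation question to a single square linear interpolation in the output weights, exactly along the lines of the single-layer solution \eqref{eqn_1layer_soln}. ``Univariate output'' means $q=1$ and ``linear output'' means that $f_2=\mathrm{id}$ (so $\phi_2=\mathrm{id}$); with these choices the network \eqref{eqn_2layer_net} collapses to $\mathbf{G}=[\mathbf{1},f_1(\mathbf{X}\bm{W}_1)]\bm{W}_2$ with $\bm{W}_2\in\Real^{(h_1+1)\times 1}$, and ``representing'' the $m$ samples means exhibiting weights for which $\mathbf{G}=\mathbf{Y}$, where $\mathbf{Y}\in\Real^{m\times 1}$ stacks the targets. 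The idea is to \emph{prefix} the hidden weights $\bm{W}_1$ (as in the \texttt{KARnet} initialization), choose the width $h_1$ so that the hidden feature matrix is square and nonsingular, and then read off $\bm{W}_2$ by a single matrix inversion.

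Carrying this out, I would take $h_1=m-1$, so that $\mathbf{H}\define[\mathbf{1},f_1(\mathbf{X}\bm{W}_1)]\in\Real^{m\times m}$. By hypothesis $f_1(\mathbf{X}\bm{W}_1)$ has full rank, i.e.\ column rank $m-1$; the one extra point to check is that prepending the bias column $\mathbf{1}$ promotes the rank to $m$, which holds for all but a lower-dimensional set of prefixed $\bm{W}_1$ because the $m$ inputs are distinct (so the rows of $f_1(\mathbf{X}\bm{W}_1)$ are distinct and $\mathbf{1}\notin\mathrm{col}\,f_1(\mathbf{X}\bm{W}_1)$ generically). Granting this, $\mathbf{H}$ is invertible, and the output-weight formula \eqref{eqn_soln_2layer_Theta} of Theorem~\ref{thm_double}, specialized to $\phi_2=\mathrm{id}$ and the square invertible $\mathbf{H}$, gives
\[
\bm{W}_2=\mathbf{H}^{\dag}\mathbf{Y}=\mathbf{H}^{-1}\mathbf{Y},
\]
whence $\mathbf{G}=\mathbf{H}\bm{W}_2=\mathbf{Y}$ exactly; by Lemma~\ref{lemma_LS} this is simultaneously the (here zero-residual) least-squares solution produced by the kernel/range manipulation.

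It remains only to count free parameters: the prefixed $\bm{W}_1$ is held fixed and is not counted, while the weights actually solved for are the $h_1+1=m$ entries of $\bm{W}_2$. Hence a fully-connected two-layer network with linear output and exactly $m$ adjustable weights represents (interpolates) the $m$ samples, and taking any larger $h_1$ --- and solving the then under-determined system $\mathbf{H}\bm{W}_2=\mathbf{Y}$ via $\mathbf{H}^{\dag}$ as in Lemma~\ref{lemma_LS} --- only increases the count, which is the ``at least $m$'' assertion. I expect the main obstacle to be precisely the rank bookkeeping for the \emph{augmented} matrix $\mathbf{H}$: the hypothesis is stated for $f_1(\mathbf{X}\bm{W}_1)$ alone, so one must either invoke genericity of the random prefix or otherwise dispose of the bias column. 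A clean way to sidestep it is to take $h_1=m$ instead, so that $f_1(\mathbf{X}\bm{W}_1)$ is itself $m\times m$ and invertible, fix the output-bias weight to $0$, and solve $f_1(\mathbf{X}\bm{W}_1)\textsf{W}_2=\mathbf{Y}$ for the remaining $m$ entries --- again exactly $m$ adjustable weights.
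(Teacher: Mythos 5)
Your proposal is correct, and its closing ``clean way'' is precisely the paper's own proof: the paper writes the linear-output network without an output bias, $\bm{g}=f_1({\bf X}\bm{W}_1)\bm{w}_2$ as in \eqref{eqn_2layer_linear_output}, takes $h_1=m$ so that $f_1({\bf X}\bm{W}_1)$ is a square full-rank (hence invertible) matrix, and reads off the $m$ entries of $\bm{w}_2$ uniquely for any target ${\bf y}$. Your primary route --- $h_1=m-1$ plus a prepended bias column --- is the only place you depart, and you correctly flag its weakness yourself: the hypothesis guarantees full rank only of $f_1({\bf X}\bm{W}_1)$, not of the augmented matrix $[{\bf 1},f_1({\bf X}\bm{W}_1)]$, so that version needs an additional genericity argument about the prefixed $\bm{W}_1$ that the theorem statement does not supply. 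Since you dispose of this by falling back to the square $h_1=m$ construction with the output bias fixed at zero, the argument as a whole stands and coincides with the paper's.
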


\begin{proof}
Consider an augmented set of inputs, then the 2-layer network of linear output can be
written as:
\begin{equation}\label{eqn_2layer_linear_output}
    \bm{g} = f_1({\bf X}\bm{W}_1)\bm{w}_2,
\end{equation}
where ${\bf X}\in\Real^{m\times(d+1)}$ is the augmented network input matrix,
$\bm{W}_1\in\Real^{(d+1)\times h_1}$ contains the hidden weights,
$\bm{w}_2\in\Real^{h_1\times 1}$ contains the output weights, and
$\bm{g}\in\Real^{m\times 1}$ is the network output vector. Since $f_1({\bf X}\bm{W}_1)$
is assumed to be of full rank for the given data and weight matrix, it turns out that
only $m$ elements are required for $\bm{w}_2$ (i.e., $h_1=m$) for solving ${\bf y} =
f_1({\bf X}\bm{W}_1)\bm{w}_2$ uniquely given any target vector ${\bf y}$. Hence the
proof.
\end{proof}



\begin{thm} (\textbf{Multilayer Network}) \label{thm_fitting_multilayer}
Given $m$ distinct samples of input-output data pairs and suppose the adopted activation
function satisfies Assumption~\ref{assump_n1}. Then there exists a fully-connected
$n$-layer network of $q$ outputs with at least $m\times q$ adjustable weights that can
represent these data samples.
\end{thm}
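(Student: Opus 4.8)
The plan is to lift Theorem~\ref{thm_twolayer_representation} from one output to $q$ outputs by isolating the output connection layer, treating all deeper weights as prefixed, and then counting the entries of the single matrix that still has to be solved. First I would fix $\bm{W}_1,\cdots,\bm{W}_{n-1}$ to arbitrary values (random, as in the \texttt{KARnet} initialization) and form the hidden representation matrix
\[
    {\bf H} = \left[{\bf 1},\, f_{n-1}\left(\cdots\left[{\bf 1},f_1({\bf X}\bm{W}_1)\right]\cdots\bm{W}_{n-1}\right)\right]\in\Real^{m\times(h_{n-1}+1)},
\]
so that the $n$-layer network \eqref{eqn_Nlayer_net} collapses to ${\bf G}=f_n({\bf H}\bm{W}_n)$ with $\bm{W}_n\in\Real^{(h_{n-1}+1)\times q}$ the only remaining unknown. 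Choosing the last hidden width so that $h_{n-1}+1=m$ makes ${\bf H}$ a square matrix of order $m$.

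The second step is to argue that this ${\bf H}$ has full rank $m$, hence is invertible. This is precisely the rank-sufficiency clause of Assumption~\ref{assump_n1}, which for $k=n$ asserts that ${\bf A}_n=[{\bf 1},f_{n-1}(\cdots)]$ is of full rank: since the $m$ input samples are distinct, the successive affine-then-elementwise maps $f_1,\dots,f_{n-1}$ send them to $m$ distinct rows entering the last hidden layer, and for a generic choice of the prefixed weights the resulting $m\times m$ matrix ${\bf H}$ is nonsingular. Granted invertibility, I would then take
\[
    \bm{W}_n = {\bf H}^{-1}\phi_n({\bf Y}),
\]
which is the exact solution of ${\bf H}\bm{W}_n=\phi_n({\bf Y})$; this is the $q$-column analogue of \eqref{eqn_output_weight_Nov11}, with the pseudoinverse there reducing to ${\bf H}^{-1}$, and it is covered by Lemma~\ref{lemma_LS_matrix} applied columnwise with the common invertible matrix ${\bf H}$. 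Substituting back and using that $\phi_n$ inverts $f_n$, i.e. $f_n(\phi_n({\bf Y}))={\bf Y}$ as holds for the invertible activations of interest (with ${\bf Y}$ in the domain of $\phi_n$, e.g. scaled into $[0,1)$ for the logit--sigmoid pair), gives ${\bf G}=f_n({\bf H}\bm{W}_n)=f_n(\phi_n({\bf Y}))={\bf Y}$, so the network represents the data exactly.

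The last step is the parameter count: with $\bm{W}_1,\cdots,\bm{W}_{n-1}$ prefixed, the only adjustable weights are the $(h_{n-1}+1)\times q=m\times q$ entries of $\bm{W}_n$; hence $m\times q$ adjustable weights suffice, and one cannot in general expect fewer, since exact interpolation of the $m\times q$ target matrix imposes $mq$ independent scalar conditions. (A wider last hidden layer also works, using $\bm{W}_n={\bf H}^{\dag}\phi_n({\bf Y})$ with ${\bf H}$ of full row rank, at the cost of carrying more than $mq$ weights.) I expect the genuine obstacle to be the full-rank assertion for ${\bf H}$: propagating distinctness of the $m$ samples through several nonlinear layers and ruling out an accidental rank deficiency at the chosen prefixed weights is exactly what Assumption~\ref{assump_n1} bundles, so the proof should lean on that hypothesis rather than re-deriving genericity. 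A more self-contained version would observe that when the activation is real-analytic and not a polynomial, the set of prefixed weight tuples for which $\det{\bf H}=0$ is a proper analytic subvariety, hence Lebesgue-null, which is why a random initialization works with probability one.
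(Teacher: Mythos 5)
Your proposal is correct and follows essentially the same route as the paper: prefix the inner weights, reduce the network to the single linear system $\phi_n({\bf Y})=[{\bf 1},{\bf A}_{n-1}]\bm{W}_n$ with $h_{n-1}+1=m$, invoke Assumption~\ref{assump_n1} for the full rank of that $m\times m$ matrix, and count the $m\times q$ entries of $\bm{W}_n$. Your closing remark on genericity of the prefixed weights (the vanishing-determinant set being a null analytic subvariety) is a nice self-contained supplement to what the paper simply assumes.
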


\begin{proof}
By adopting a recursive notation of ${\bf A}_{k}=f_k([{\bf 1},{\bf A}_{k-1}]\bm{W}_k)$
for $k=2,3,\cdots,n-2,n-1$ with ${\bf A}_{1}=f_{1}({\bf X}\bm{W}_{1})$, the $n$-layer
network can be written as
\begin{equation}\label{eqn_2layer_outermost1}
    \bm{G} = f_n([{\bf 1},{\bf A}_{n-1}]\bm{W}_n)\ \ \ \in\Real^{m\times q}.
\end{equation}
Consequently, \eqref{eqn_2layer_outermost1} can be viewed as a system of linear equations
to learn the weight matrix $\bm{W}_n\in\Real^{h_{n-1}\times q}$ for fitting the target
${\bf Y}\in\Real^{m\times q}$ based on the transformation using $\phi_n$:
\begin{equation}\label{eqn_2layer_outermost2}
    \phi_n({\bf Y}) = [{\bf 1},{\bf A}_{n-1}]\bm{W}_n.
\end{equation}
To fully represent the $m$ number of data points labelled by $\phi_n({\bf Y})$, the
matrix $[{\bf 1},{\bf A}_{n-1}]$ needs to have at least rank $m$. This means that the
output weight matrix $\bm{W}_n$ requires at least $m$ (i.e., $h_{n-1}+1=m$) adjustable
weight elements for each of its $q$ columns corresponding to the output dimension. As for
the inner layer weights $\bm{W}_k$, $k=1,\cdots,n-1$, an identity matrix of $m\times m$
size would be suffice for transferring the input information from the innermost layer to
the $(n-1)^{th}$ layer. Since the compositional function $f_{n-1}(\cdots f_1({\bf
X}\bm{W}_1)\cdots)$ is of full rank according to Assumption~\ref{assump_n1}, $[{\bf
1},{\bf A}_{n-1}]$ in \eqref{eqn_2layer_outermost2} has full rank of size $m$. This means
that $\bm{W}_n$ in \eqref{eqn_2layer_outermost2} can be estimated uniquely.

In a nutshell, the $n$-layer network with $q$ outputs can be reduced to $q$ number of the
single-output network of two-layer in \eqref{eqn_2layer_linear_output} where each output
requires $m$ number of adjustable weights. Hence the proof.
\end{proof}


\subsection{Towards Understanding of Network Generalization}

In view of the good results observed in deep learning despite the huge number of weight
parameters, many attempts can be found in studying the generalization property of deep
networks (see e.g.,
\cite{Poggio10,Kawaguchi3,YBengio3,Neyshabur3,Foster1,Neyshabur2,Neyshabur1,YBengio2,LiHao1,Dziugaite1,Langford1}).
Based on our result on network representation, several properties related to network
generalization are observed.

\begin{enumerate}
\item The array structure of the fully-connected network in matrix form plays a key role
in constraining the complexity of model for approximation. Particularly, the number of
hidden nodes in each layer ($h_i$, $i=1,2,...,n$) determines the column size of the
weight matrix $\bm{W}_i$. This column size directly influences the effective parameter
size in each layer. Hence, the effective number of parameters (and the complexity of
approximation) does not grow exponentially (with the size of the gross network
parameters) with respect to the number of layers.

\item Thanks to the high redundancy of weights within each column, such an array
structure of the weight matrix offers numerous alternative solutions that attain the
desired fitting result. This accounts for the ease of fitting even with random
initialization of the weights (see the proposed algorithm and
\eqref{eqn_soln_Nlayer_Theta}) and leaves the remaining issue of generalization to the
representativeness of the training data. This property shall be exploited in the
algorithm for experimentation.

\item The representation capability of network is shown to be grounded upon the
well-known theory of linear algebra. However to our surprise, the nonlinearity in the
activation functions actually contributes to the well-posedness (such as making the
matrix invertible) of the problem than otherwise.

\item Capitalized on the minimum norm property according to Lemma~\ref{lemma_LS} and
Lemma~\ref{lemma_LS_matrix}, no explicit regularization has been incorporated into the
learning. The least squares approximation (section~\ref{sec_LS_approx}) in the primal
space (with covariance matrix ${\bf A}^T{\bf A}\in\Real^{d\times d}$ in
\eqref{eqn_d_space_primal}) and the dual space (with Gram matrix ${\bf A}{\bf
A}^T\in\Real^{m\times m}$ in \eqref{eqn_m_space_dual}) has, indeed, long been known in
the theory of linear systems \cite{Albert1}. Unfortunately, the effectiveness of such
utilization of the primal-dual pair has been masked by the popularity of weight decay
regularization. The observation in \cite{ZhangCY1} which compares between the regularized
results with those of non-regularized ones is not surprising due to the minimum norm
solution nature (section~\ref{sec_LS_approx}) which has long been known in the
statistical community.

\end{enumerate}

Given the mapping capability of the network together with the above observations, we
conjecture that network generalization is hinged upon the \emph{model complexity} and the
similarity/difference between the \emph{distributions} of the training and testing data
sets (such as \emph{optimism} defined by \cite{Efron3}).

\section{Case Study} \label{sec_synthetic}

In this study, we show case the results of the proposed learning algorithm
\texttt{KARnet} on two benchmark examples. The goal of the first example is to observe
the decision surface and the training CPU time for the compared methods on mapping
linearly non-separable data. The chosen data set is the well-known XOR problem. The
second example aims to validate the network representation theory. The chosen problem is
the iris flower data set from the UCI machine learning repository \cite{UCI1b}. All the
experimental studies were carried out on an Intel Core i7-6500U CPU at 2.59GHz with 8G of
RAM.

\subsection{The XOR problem} \label{sec_XOR}

In this example, the proposed learning algorithm \texttt{KARnet} and the conventional
\texttt{\texttt{feedforwardnet}} from the Matlab toolbox \cite{Matlab} are evaluated in
terms of their decision landscapes. Both the compared methods use the `\texttt{logit}'
(i.e., $f=\log(x/(1-x))$ as the activation function where its functional inverse is
$f^{-1}=1/(1+e^{-x})=$ `\texttt{sigmoid}') for $x\in[0,1)$. The training of
\texttt{feedforwardnet} uses the default Levenberg-Marquardt `\texttt{trainlm}' search
and stopping. For both learning algorithms, a two-layer network with one hidden layer
that consists of two nodes ($h_1=2$) and a five-layer network with each layer composing
of five hidden nodes ($h_1=h_2=h_3=h_4=3$, $q=1$, we call it a 3-3-3-3-1 structure) are
evaluated.

The learned decision surfaces for the two-layer \texttt{KARnet} and
\texttt{feedforwardnet} are shown in Fig.~\ref{fig_XOR_2n10layer}(a) and (b)
respectively. Due to the perfect symmetry of the XOR data points which can cause
degenerative pseudoinverse computation, a small perturbation has been included to the
data points where the training data points are $(x_1,x_2)\in$ $\{(0,0)$,
$(0.9991,0.9991)$, $(0.9990,0)$, $(0,0.9990)\}$ for respective target values $y\in\{0, 0,
1, 1\}$. The decision surface in Fig.~\ref{fig_XOR_2n10layer}(a) shows a perfect fitting
of all the data points in the two classes for \texttt{KARnet}. Contrastingly, the
decision surface of \texttt{feedforwardnet} in Fig.~\ref{fig_XOR_2n10layer}(b) shows a
complex fitting of the data points. This could be due to an early stopping of search at
steep error surface of the \texttt{logit} function. The trained outputs of the two-layer
networks for the four data points are [$0$, $0$, 1, 1] and [$-0.0758$, $0.0665$,
$0.4302$, $1.0988$] respectively for \texttt{KARnet} and \texttt{feedforwardnet}.

The learned output surfaces for the two five-layer networks namely, \texttt{KARnet} and
\texttt{feedforwardnet} are shown in Fig.~\ref{fig_XOR_2n10layer}(c) and (d)
respectively. The trained outputs for the five-layer \texttt{KARnet} and
\texttt{feedforwardnet} are respectively [$0$, 0, 1, 1] and [$-0.1527$, 0.0331,
$-0.2331$, 0.0296]. This result, again, shows inadequate stopping of the gradient-based
search.

\begin{figure}[hhh]
  \begin{center}
\begin{tabular}{cc}
  \epsfxsize=6.08cm
  \hspace{-5mm}
  \epsffile[29     4   381   292]{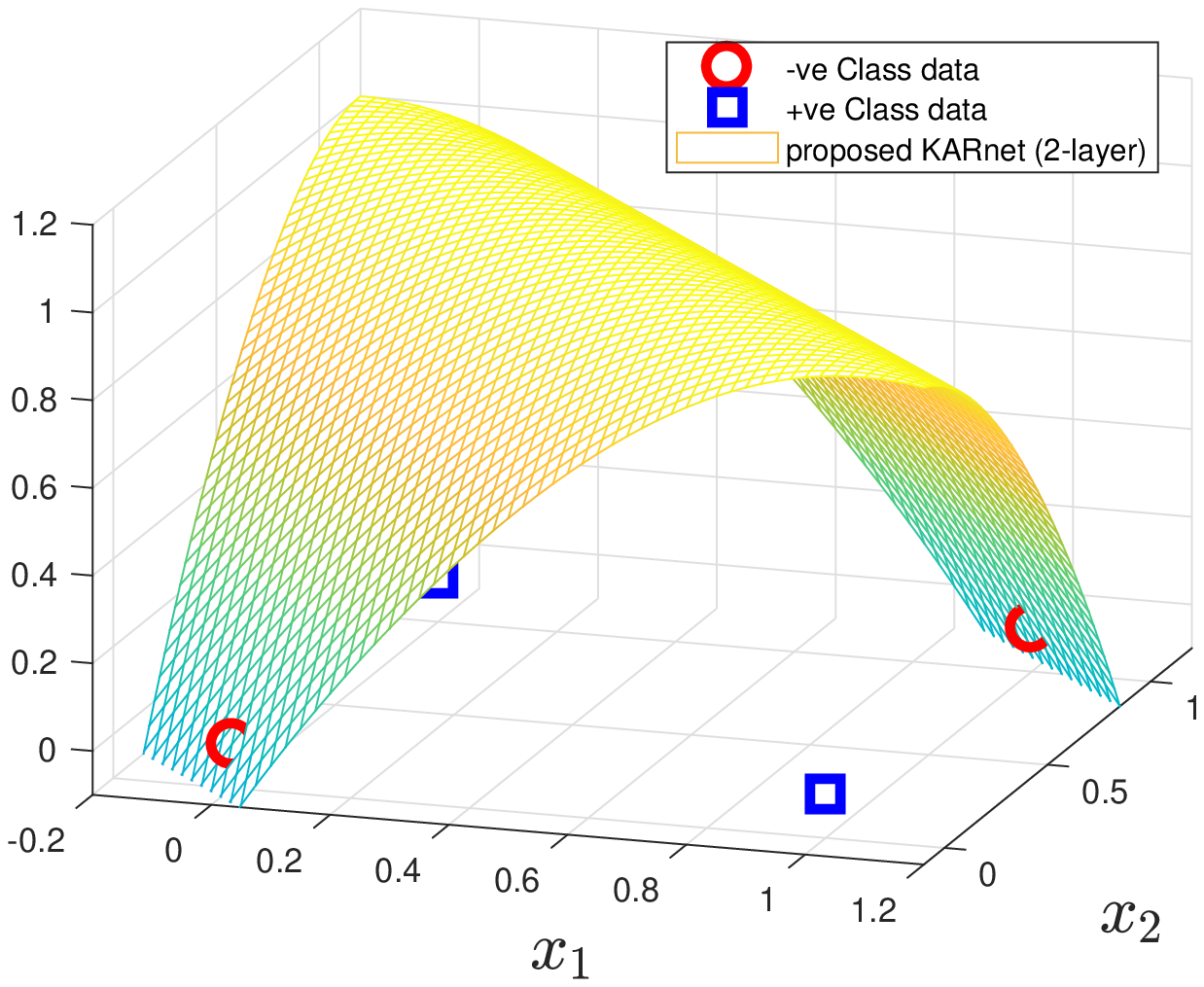} &
  \epsfxsize=6.08cm
  \hspace{0cm}
  \epsffile[29     4   381   292]{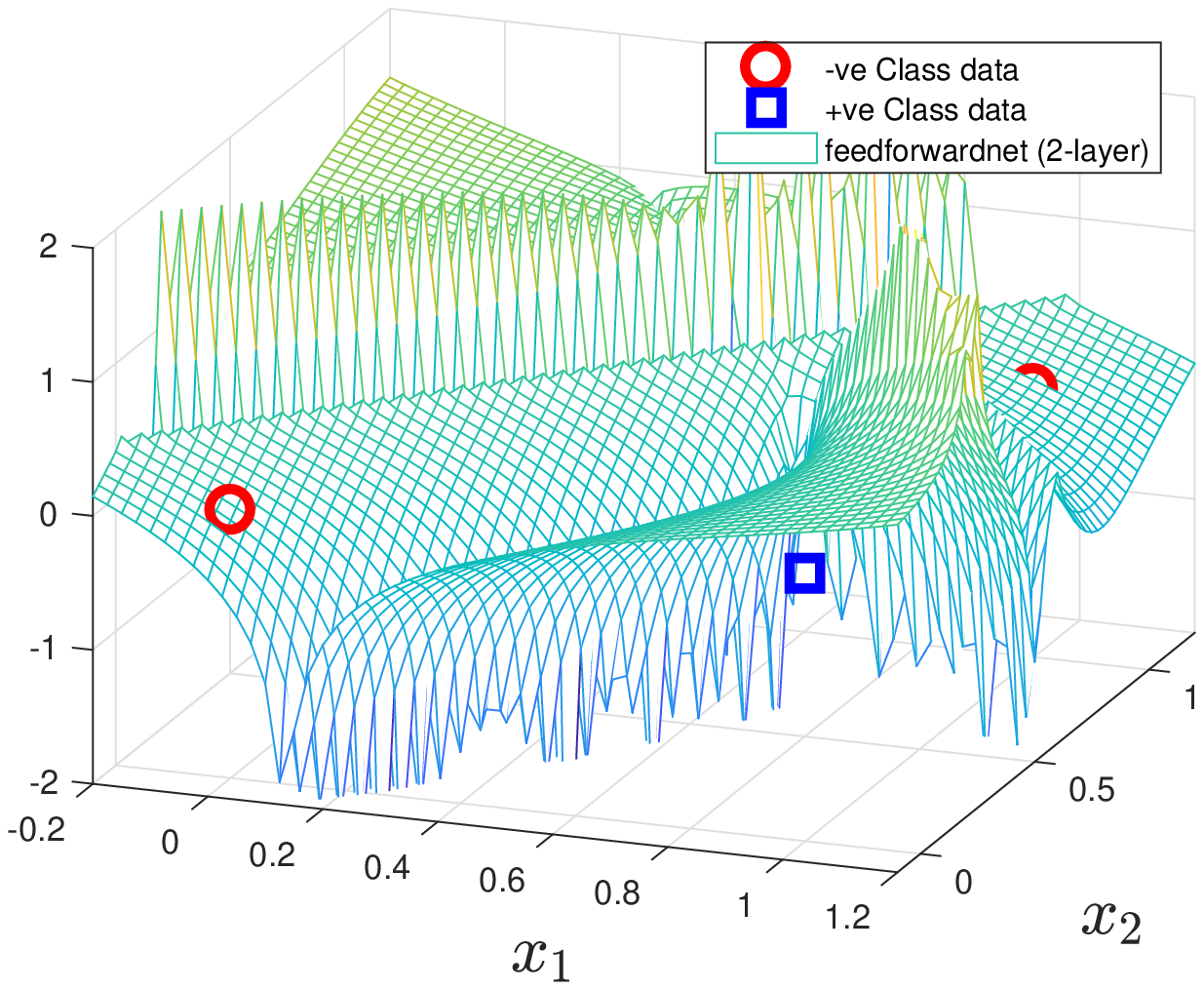}
  \\*[0mm] \hspace{3mm} (a) & \hspace{13mm} (b) \\*[5mm]
  \epsfxsize=6.08cm
  \hspace{-5mm}
  \epsffile[27     5   381   292]{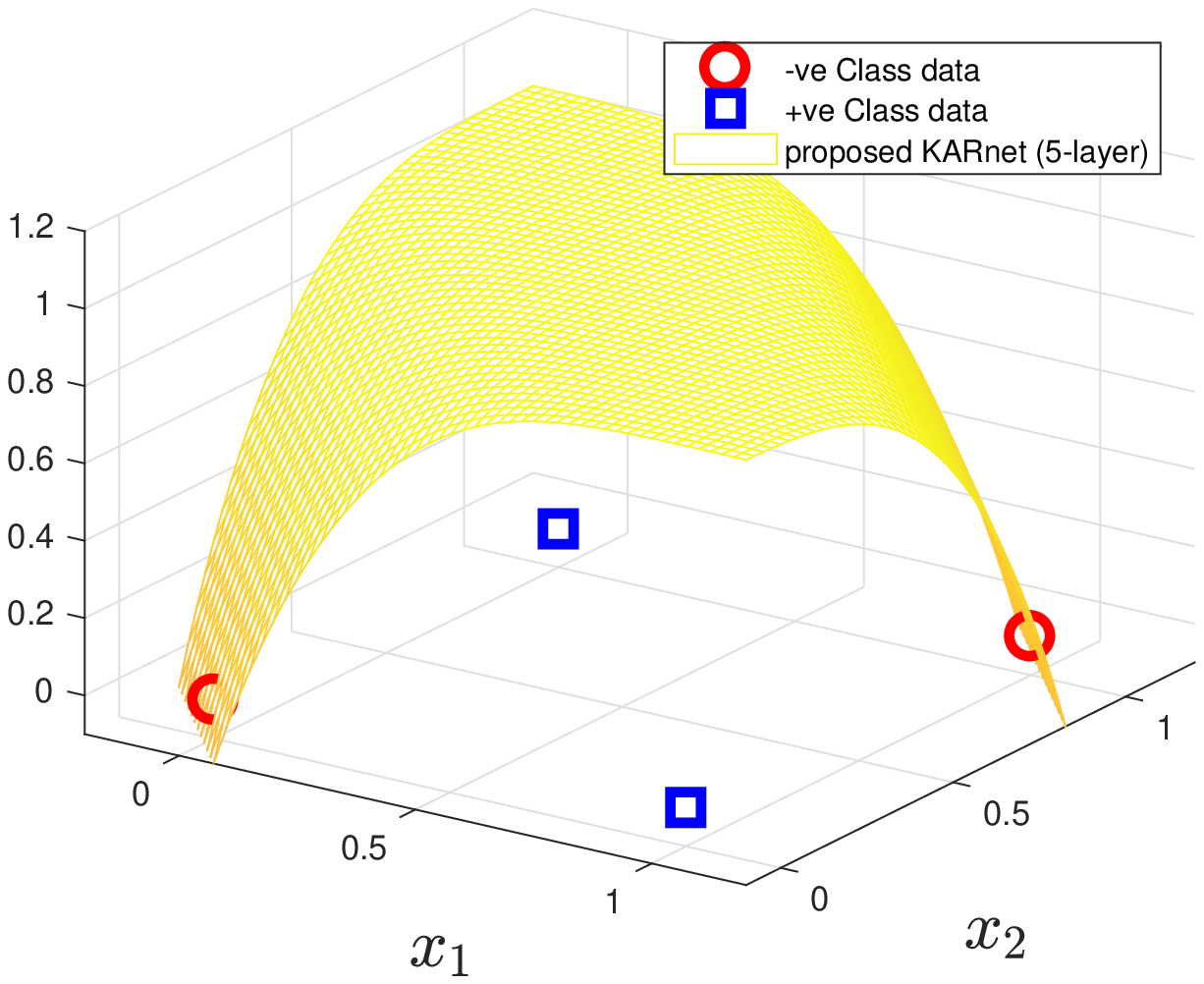} &
  \epsfxsize=6.08cm
  \hspace{0cm}
  \epsffile[27     5   381   292]{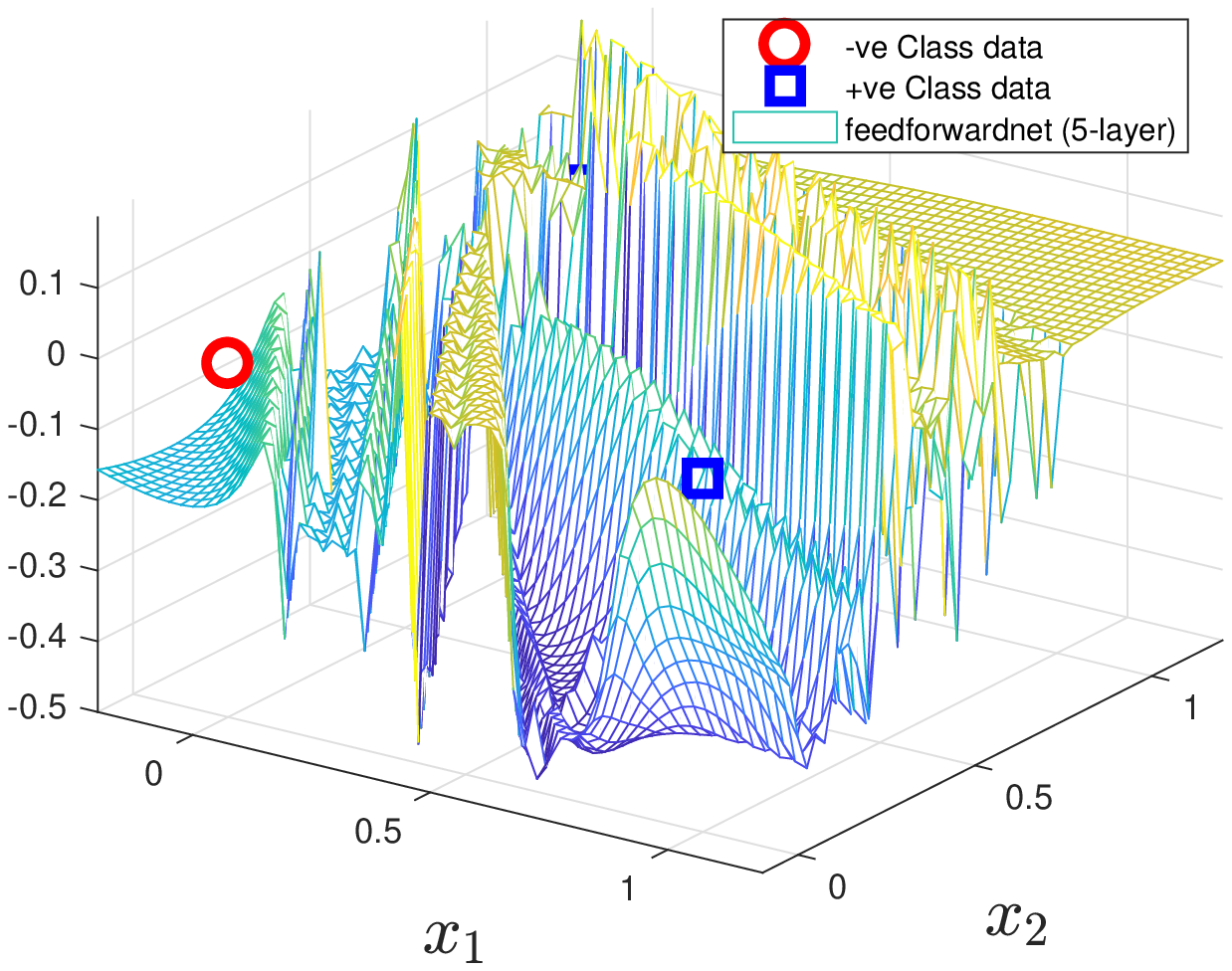}
  \\*[0mm] \hspace{3mm} (c) & \hspace{13mm} (d)
\end{tabular}
  \caption{Decision surface plots of two-layer/five-layer feedforward networks
  (proposed \texttt{KARnet}($\f=logit$, $\g=sigmoid$) and \texttt{feedforwardnet})
  trained by the proposed method and trainlm.}
  \label{fig_XOR_2n10layer}
  \end{center}
\end{figure}

For the two-layer networks, the clocked training CPU times are 0.0625 and 1.0781 seconds
for \texttt{KARnet} and \texttt{feedforwardnet} respectively. For the five-layer
networks, the training CPU times for \texttt{KARnet} and \texttt{feedforwardnet} are
0.1250 and 1.0938 seconds respectively. Certainly, these computational time results show
the efficiency of the proposed training comparing with the gradient descent search even
without optimizing our code implementation.

\subsection{Network Representation: Iris Plant Data} \label{sec_iris}

The iris plant data is a well-known benchmark data set in classification with small
sample size ($150$ in total) and small dimension ($d=4$). Among the 150 samples, each of
the three categories ($q=3$) contains 50 samples. For illustration purpose, 90 samples
(within which 30 samples per category) are used for training and the remaining 60 are
used as test samples. A two-layer network is used in this study. In order to illustrate
the systems of over-determined and under-determined cases, the number of nodes in the
hidden layer is chosen as $h_1\in\{79,80,\cdots,93\}$. This gives a hidden weight
$\bm{W}_1$ of size $(4+1)\times h_1$. Since the training sample size is $m=90$, the case
for which $h_1=90$ is an equal-determined case in view of the output weights, whereas
those with lower and higher sizes are respectively the over-determined and
under-determined cases.

The two-layer network is learned using a randomized $\bm{W}_1$ with an one-versus-all
indicator target. Fig.~\ref{fig_over_under_deterimined} shows two cases of learning
outputs for $h_1=89$ (Fig.~\ref{fig_over_under_deterimined}(a)) and $h_1=90$
(Fig.~\ref{fig_over_under_deterimined}(b)). Here, we observe that the outputs have a
perfect fit to all data samples when $h_1=90$. Conversely, we do not see a perfect fit
when $h_1=89$ because there are more data than the \emph{number of effective
parameters}.\footnote{We use the term \emph{number of effective parameters} for the
purpose of differentiating it with the actual number of weight elements in total.
Essentially, the \emph{effective parameters} bear the spirit of the required matrix
dimension for identifiability.}

\begin{figure}[hhh]
  \begin{center}
\begin{tabular}{cc}
  \epsfxsize=6.08cm
  \hspace{-5mm}
  \epsffile[22     2   385   312]{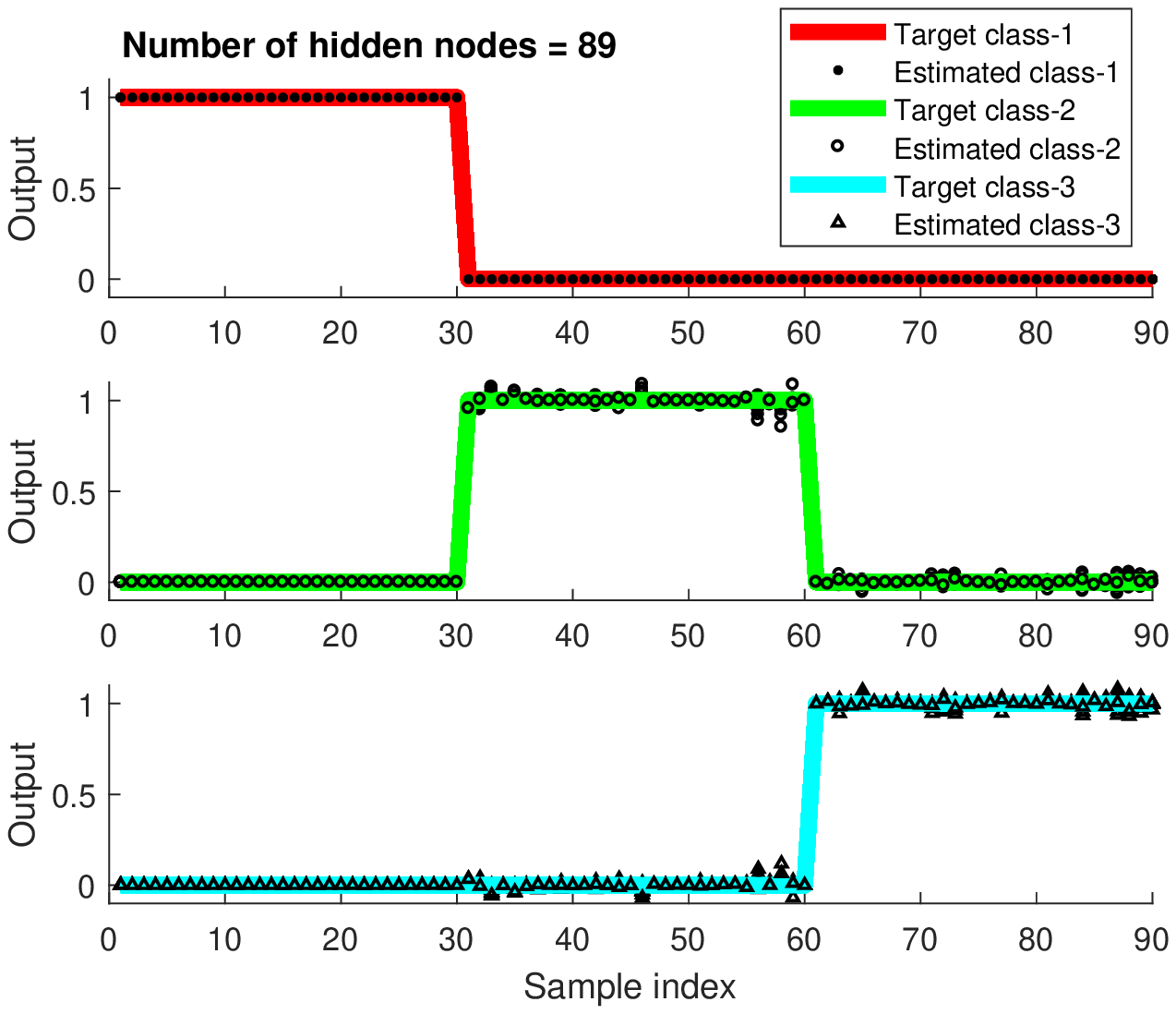} &
  \epsfxsize=6.08cm
  \hspace{0cm}
  \epsffile[22     2   385   312]{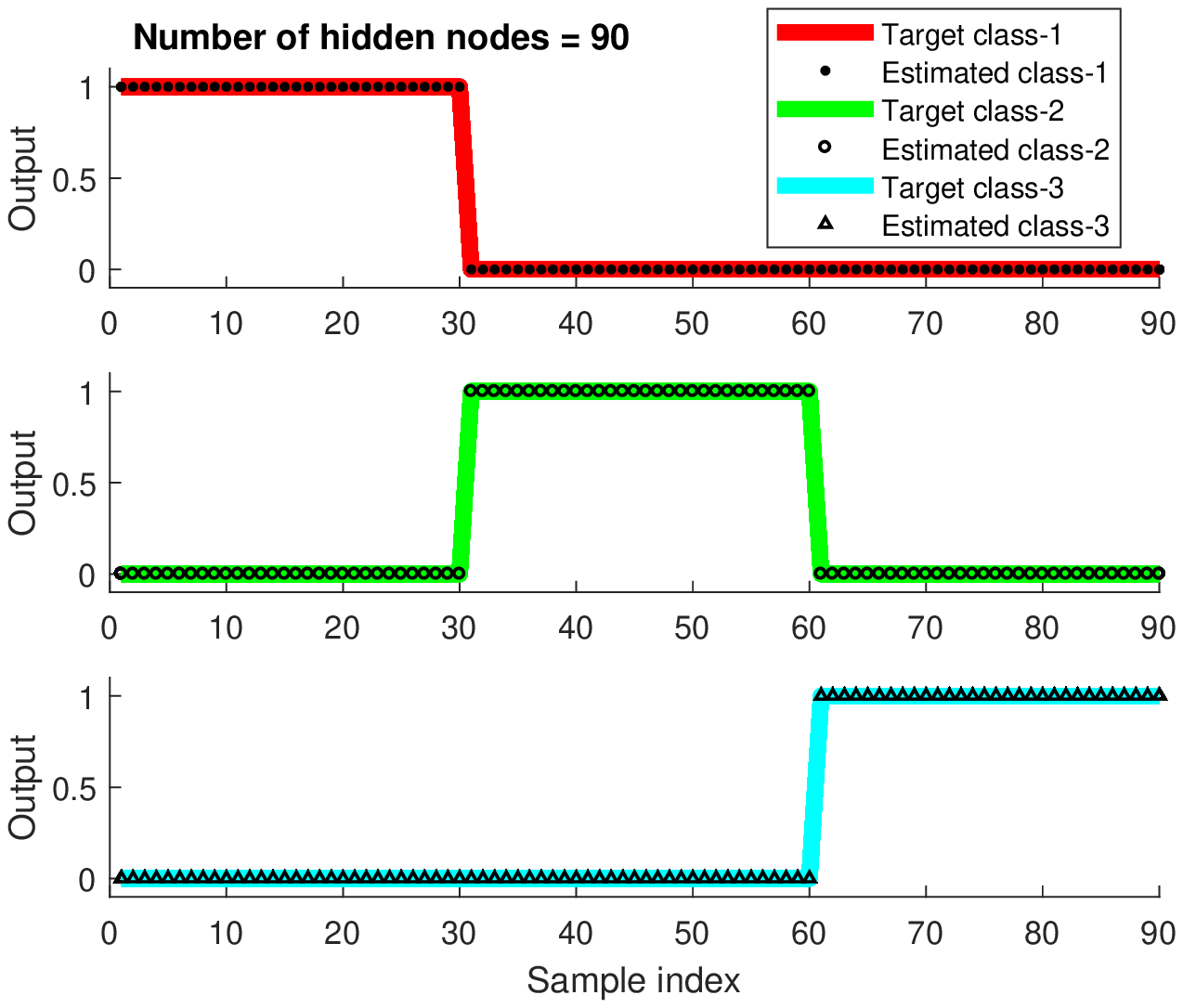}
  \\*[0mm] \hspace{3mm} (a) & \hspace{13mm} (b)
\end{tabular}
  \caption{Target outputs and learned outputs of a 2-layer \texttt{KARnet}.}
  \label{fig_over_under_deterimined}
  \end{center}
\end{figure}

Fig.~\ref{fig_MSE2} shows 10 Monte Carlo trials of the quantitative fitting results in
terms of the Sum of Squared Errors (SSE) and the classification error count rates for
both training and testing. Based on the theory of representation, the two-layer network
shall fit the target value of all the data samples when $h_1\geq m$ ($m=90$ here) for
each of the 3 output columns. This is clearly observed from the SSE plot where SSE=0 is
observed for $h_1\geq 90$. Since this is a classification problem, the training and test
classification error rates are also included in the plot. These results show that the
training has attained a zero classification error count while the test classification
error rate is not significantly affected by the goodness of fit in terms of SSE.

\begin{figure}[hhh]
  \begin{center}
  \epsfxsize=8.8cm
  \epsffile[22     4   385   295]{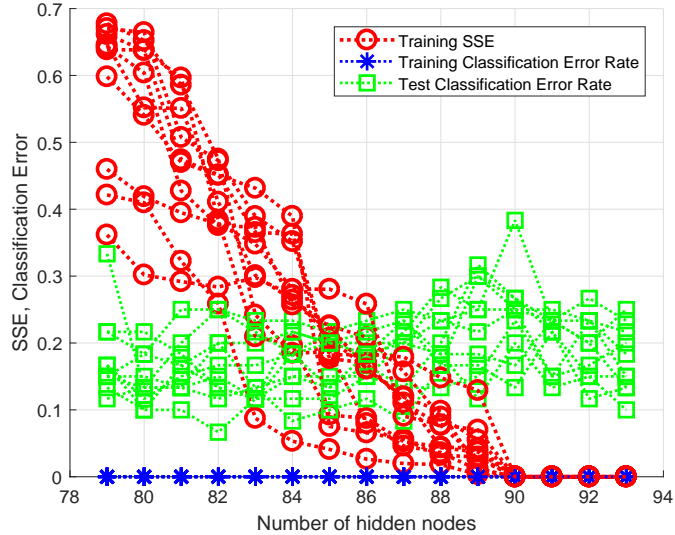}
  \caption{The SSE and the classification error count rates for a two-layer network}
  \label{fig_MSE2}
  \end{center}
\end{figure}

Next, we train a five-layer network with the first four layers having randomized weights.
Except for $\bm{W}_5$, each of the weights $\bm{W}_1$, ..., $\bm{W}_4$ has a similar
column size to that of the first layer. Fig.~\ref{fig_MSE5} shows the SSE and
classification error rates. This result shows that zero SSE can be attained at column
size ($h_i$, $i=1,...,4$) lower than 90! This suggests the number of hidden weight nodes
(columns) may be traded by the layer depth. This opens up an important topic for future
study.

\begin{figure}[hhh]
  \begin{center}
  \epsfxsize=8.8cm
  \epsffile[22     4   385   295]{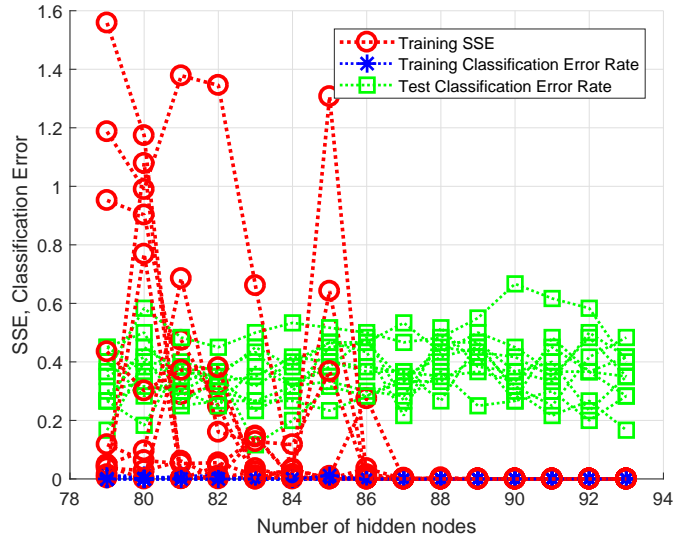}
  \caption{The SSE and the classification error count rates for a five-layer network}
  \label{fig_MSE5}
  \end{center}
\end{figure}

\section{Experiments} \label{sec_expts}

In this section, the proposed learning algorithm \texttt{KARnet} is evaluated on
real-world data sets taken from the public domain. Due to the limited computing facility
(lack of GPU), only data sets ranging from small to medium sample sizes are considered.
The goals of this experimentation include (i) to verify the practical feasibility of the
formulation on real-world data of relatively small dimension ($d\leq 64$); (ii) if
practically feasible, to observe how well it compares with the state-of-the-art
backpropagation algorithm that utilizes the same `\texttt{logit}' activation function
with that in \texttt{KARnet}. (iii) to record the computational training CPU times for
the proposed algorithm and the gradient based network learning counterpart. Since the
fine tuning of network and learning generalization are topics for further research, the
scope of current empirical investigation is limited to observing the practical
feasibility, the basic behavior in terms of raw attainable accuracy without fine tuning,
and the computational efficiency of the proposed algorithm. A total of 42 data sets from
the UCI machine learning repository \cite{UCI1b} are adopted for this study.

\subsection{UCI Data Sets}

The adopted data sets from the UCI machine learning repository \cite{UCI1b} is summarized
in Table~\ref{table_summary_attributes} with their pattern attributes \cite{Toh39}. The
experimental goals (i) and (ii) are evaluated by observing the prediction accuracy of the
algorithm.

\begin{table}
\begin{center}\vspace{-6mm}
\caption{Summary of UCI \cite{UCI1b} data sets and chosen hidden layer sizes based on
cross-validation within the training set} \label{table_summary_attributes}{\tiny
\begin{tabular}{|cl|cccc|cc|c|c|} \hline
    &                     & (i)       & (ii)      & (iii)     & (iv)    &
    \multicolumn{2}{c}{2-layer} \vline & 3-layer    & 4-layer \\
    &                     &           &           &           &         &
    (hidd.size)           & (hidd.size) & (hidd.size) & (hidd.size) \\
    & Database name       & {\#}cases & {\#}feat & {\#}class& {\#}miss & FFnet & \texttt{KARnet} & \texttt{KARnet} & \texttt{KARnet} \\ \hline
1.  & Shuttle-l-control   & 279(15)   & 6        & 2        & no       &  30  &  1 &   2 &  10  \\
2.  & BUPA-liver-disorder & 345       & 6        & 2        & no       &  80  &  2 &   1 &   2  \\
3.  & Monks-1             & 124(432)  & 6        & 2        & no       &  20  &  3 &   1 &   1  \\
4.  & Monks-2             & 169(432)  & 6        & 2        & no       &  30  &  1 & 500 & 500  \\
5.  & Monks-3             & 122(432)  & 6        & 2        & no       &  30  & 80 &   1 &   3 \\
6.  & Pima-diabetes       & 768       & 8        & 2        & no       &  80  &  2 &   1 &  10 \\
7.  & Tic-tac-toe         & 958       & 9        & 2        & no       & 200  & 20 &  20 &  20 \\
8.  & Breast-cancer-Wiscn & 683(699)  & 9(10)    & 2        & 16       &  50  &  1 &   1 &  20 \\
9.  & StatLog-heart       & 270       & 13       & 2        & no       &  80  &  1 &  10 &   5 \\
10. & Credit-app          & 653(690)  & 15       & 2        & 37       &  80  &  3 &   1 &  10 \\
11. & Votes               & 435       & 16       & 2        & yes      &  10  &  1 &  20 &   1 \\
12. & Mushroom            & 5644(8124)& 22       & 2        & attr{\#}11 & 200&  1 &  30 &   5 \\
13. & Wdbc                & 569       & 30       & 2        & no       &  30  &  1 &   1 &   3 \\
14. & Wpbc                & 194(198)  & 33       & 2        & 4        &  80  &  2 &  20 & 500 \\
15. & Ionosphere          & 351       & 34       & 2        & no       &  80  &  5 &   5 &  10 \\
16. & Sonar               & 208       & 60       & 2        & no       &  80  &  1 &  50 &   2 \\
\hline
17. & Iris                & 150       & 4        & 3        & no       & 500  &  5 &   5 &   2 \\
18. & Balance-scale       & 625       & 4        & 3        & no       & 500  & 20 &  100&   3 \\
19. & Teaching-assistant  & 151       & 5        & 3        & no       & 500  &  2 &  80 & 200 \\
20. & New-thyroid         & 215       & 5        & 3        & no       & 500  & 30 &   5 &   2 \\
21. & Abalone             & 4177      & 8        & 3(29)    & no       & 80   & 10 &  20 &   3 \\
22. & Contraceptive-methd & 1473      & 9        & 3        & no       &   5  & 80 &  10 &   5 \\
23. & Boston-housing      & 506       & 12(13)   & 3(cont)  & no       & 500  & 10 &  10 &  20 \\
24. & Wine                & 178       & 13       & 3        & no       & 500  &  5 &   5 &   5 \\
25. & Attitude-smoking$^+$& 2855      & 13       & 3        & no       &   2  &  1 &  50 &  10 \\
26. & Waveform$^+$        & 3600      & 21       & 3        & no       &  50  & 80 &  10 &   5 \\
27. & Thyroid$^+$         & 7200      & 21       & 3        & no       & 100  & 10 &  10 &  20 \\
28. & StatLog-DNA$^+$     & 3186      & 60       & 3        & no       & 200  & 80 &   5 &  50 \\
\hline
29. & Car                 & 2782      & 6        & 4        & no       & 500  & 50 & 500 & 500 \\
30. & StatLog-vehicle     & 846       & 18       & 4        & no       & 500  & 100&  50 &   5 \\
31. & Soybean-small       & 47        & 35       & 4        & no       &  50  &  1 &   1 &   1 \\
32. & Nursery             & 12960     & 8        & 4(5)     & no       & 500  & 30 &  30 &  10 \\
33. & StatLog-satimage$^+$& 6435      & 36       & 6        & no       &  80  & 100&  50 &  20 \\
34. & Glass               & 214       & 9(10)    & 6        & no       & 100  & 30 &   5 &  10 \\
35. & Zoo                 & 101       & 17(18)   & 7        & no       & 100  & 500&  10 &  80 \\
36. & StatLog-image-seg   & 2310      & 19       & 7        & no       & 100  & 100&  50 &  20 \\
37. & Ecoli               & 336       & 7        & 8        & no       & 500  & 30 &  10 &   5 \\
38. & LED-display$^+$     & 6000      & 7        & 10       & no       & 500  & 30 &  20 &  10 \\
39. & Yeast               & 1484      & 8(9)     & 10       & no       & 500  & 500&  30 &  30 \\
40. & Pendigit            & 10992     & 16       & 10       & no       & --   & 500&  80 & 200 \\
41. & Optdigit            & 5620      & 64       & 10       & no       & --   & 500& 100 &  30 \\
42. & Letter              & 20000     & 16       & 26       & no       & --   & 500& 200 & 200 \\
\hline
\end{tabular} } {\tiny
\begin{tabular}{llp{12.5cm}}
(i-iv) &: & (i) Total number of instances, i.e. examples, data points, observations
(given number of instances). Note: the number of instances used is larger than the given
number of instances when we expand those ``don't care'' kind of attributes in some data
sets; (ii) Number of features used, i.e. dimensions, attributes (total number of features
given); (iii) Number of classes (assuming a discrete class variable);
(iv)  Missing features; \\
$+$ &:   & Accuracy measured from the given training and test set instead of 10-fold
validation (for large data cases with test set containing at least 1,000 samples); \\
FFnet &: & The \texttt{feedforwardnet} from the Matlab's toolbox; \\
hidd.size &: & For 3layer and 10layer networks, the number of hidden layer nodes are
chosen similarly for simplicity of study; \\
Note &:  & Data from the Attitudes Towards Smoking Legislation Survey - Metropolitan
Toronto 1988, which was funded by NHRDP (Health and Welfare Canada), were collected by
the Institute for Social Research at York University for Dr. Linda Pederson and Dr.
Shelley Bull.
\end{tabular} }
\end{center}
\end{table}

\subsection*{(i) The prediction performance}

In this experiment, the prediction performance of the proposed \texttt{KARnet} is
compared with that of the \texttt{feedforwardnet} from the Matlab's toolbox \cite{Matlab}
using a two-layer fully-connected network structure. Both the networks adopted the
\emph{logit} activation function in this study. The performance is evaluated based on 10
trials of 10-fold stratified cross-validation tests for each of the data set. The
selection of the number of hidden nodes is based on another 10-fold cross-validation
within the training set.

The chosen hidden node size (based on a search within the set $\{$1, 2, 3, 5, 10, 20, 30,
50, 80, 100, 200, 500$\}$ by cross-validation using only the training set) is shown in
the third column-group of Table~\ref{table_summary_attributes}. For both the two-layer
networks, the selected size of hidden nodes appear to be much different for
\texttt{KARnet} and \texttt{Feedforwardnet} among the data sets. Apparently, there seems
to have no correlation between the choice of hidden node sizes for the two networks. The
choice of the size of the hidden nodes is dependent on the training accuracy where some
data sets appear to be more over-fit than others.

The average results of prediction accuracies (based on 10 trials of 10-fold
cross-validation) for such choice of hidden node size for \texttt{KARnet} and
\texttt{Feedforwardnet} are shown in Fig.~\ref{fig_compare1}. For the
\texttt{Feedforwardnet}, the accuracies of data sets 40, 41 and 42 are not available due
to the ``out of memory'' limitation for the chosen network size of 500 hidden nodes on
the current computing platform. From these results, the accuracies of \texttt{KARnet}
appear to be significantly better than that of \texttt{Feedforwardnet} for most of the
data sets. The gross average accuracies for \texttt{KARnet} and \texttt{Feedforwardnet}
are respectively 79.39\% and 66.99\% based on the first 39 data sets. These results
suggest relatively good generalization performance of \texttt{KARnet} than that of
\texttt{Feedforwardnet} based on the chosen activation function.

\begin{figure}[hhh]
  \begin{center}
  \epsfxsize=12cm
  \epsfysize=6cm
  \epsffile[58     5   626   304]{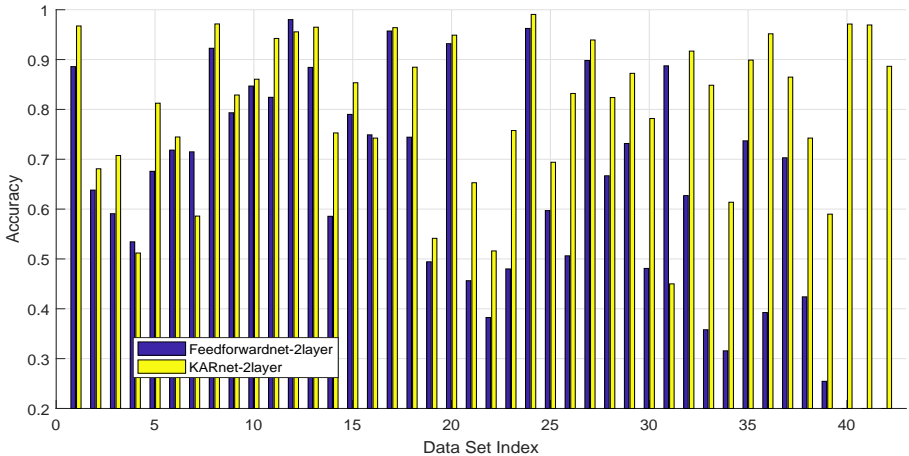}
  \caption{Comparing \texttt{KARnet} with \texttt{Feedforwardnet} at their best chosen hidden nodes size
  based on cross-validation using only the training data. The accuracies of data sets 40, 41 and 42 for
  \texttt{Feedforwardnet} are not shown due to the ``out of memory'' limitation for the current
  computing platform.}
  \label{fig_compare1}
  \end{center}
\end{figure}

In the next experiment, we observe the generalization performance of \texttt{KARnet} when
it uses the same hidden node size as that of \texttt{Feedforwardnet}.
Fig.~\ref{fig_compare2} shows the average accuracies of \texttt{KARnet} which are plotted
along with those of \texttt{Feedforwardnet}. It is clear from the figure that
\texttt{KARnet} outperforms \texttt{Feedforwardnet} for most data sets. The gross average
accuracy for \texttt{KARnet} based on 39 data sets is 80.03\% (comparing with 66.99\% for
\texttt{Feedforwardnet}).

\begin{figure}[hhh]
  \begin{center}
  \epsfxsize=12cm
  \epsfysize=6cm
  \epsffile[57     5   618   303]{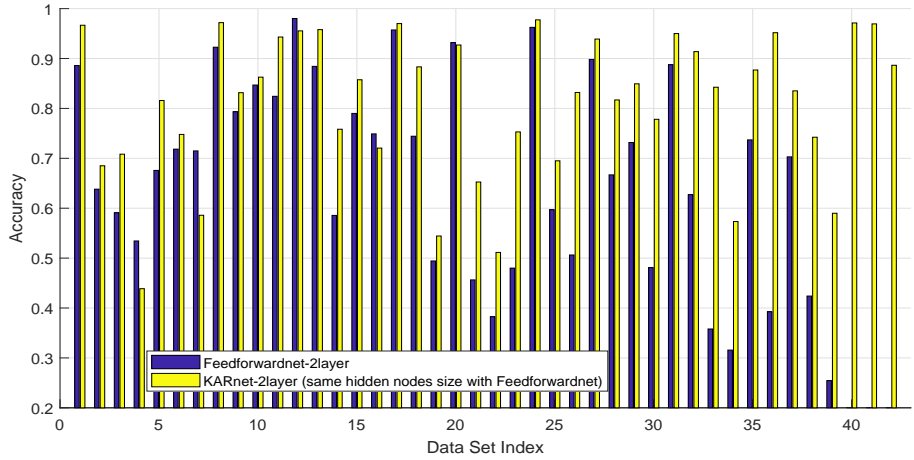}
  \caption{Comparing the accuracy of \texttt{KARnet} with that of \texttt{Feedforwardnet} at
  the same hidden nodes size.}
  \label{fig_compare2}
  \end{center}
\end{figure}

\subsection*{(ii) The training CPU time}

After observing the feasibility of network learning on the UCI data sets, we move on to
observe the training CPU time clocked for both the studied methods.
Fig.~\ref{fig_compare_CPU1} shows the training CPU times for \texttt{KARnet} and
\texttt{Feedforwardnet} at their optimized hidden node size based on cross-validation
using the training set. Fig.~\ref{fig_compare_CPU2} shows the training CPU times at the
same hidden node size for the two studied methods. For the optimized network size, the
gross average CPU times over the first 39 data sets are respectively 0.0211 seconds and
180.37 seconds for \texttt{KARnet} and \texttt{Feedforwardnet}. When the \texttt{KARnet}
uses the same network size with that of \texttt{Feedforwardnet}, the gross average CPU
times over the first 39 data sets is 0.8003 seconds. This shows a 225 times faster in
training speed for \texttt{KARnet} over \texttt{Feedforwardnet}. The gain from the
training speed is attributed to the closed-form and non-iterative learning.

\begin{figure}[hhh]
  \begin{center}
  \epsfxsize=12cm
  \epsfysize=6cm
  \epsffile[53     5   630   306]{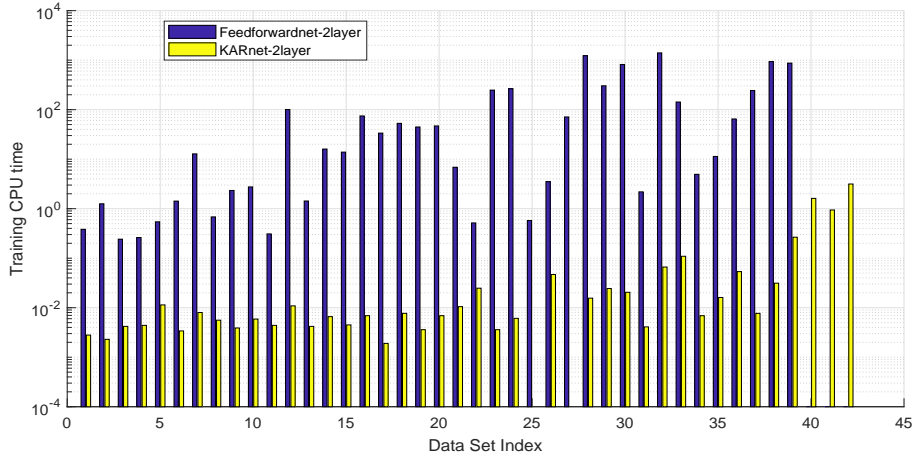}
  \caption{Comparing the training CPU time (seconds) of \texttt{KARnet} with that of \texttt{Feedforwardnet} at their
  cross-validation optimized size of hidden nodes.}
  \label{fig_compare_CPU1}
  \end{center}
\end{figure}

\begin{figure}[hhh]
  \begin{center}
  \epsfxsize=12cm
  \epsfysize=6cm
  \epsffile[52     5   626   305]{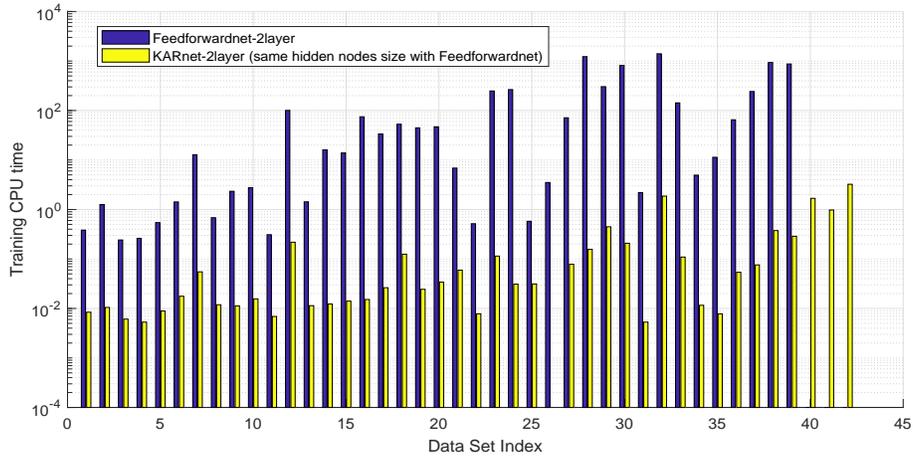}
  \caption{Comparing the training CPU time (seconds) of \texttt{KARnet} with that of \texttt{Feedforwardnet}
  when both the networks have the same size of hidden nodes.}
  \label{fig_compare_CPU2}
  \end{center}
\end{figure}

\subsection*{(iii) The effect of deep layers}

Having verified the feasibility and the prediction performance with the training
processing time, we move a step further to observe the effect of a deeper structure
utilizing an exponentially increasing number of hidden nodes towards the input layer.
Specifically, the two-layer network uses a structure of $[h,\ q]$ for the hidden layer
and the output layer respectively. The three-layer network uses a $[2\times h,\ h,\ q]$
structure and the four-layer network uses a $[4\times h,\ 2\times h,\ h,\ q]$ structure.
The dimension of output is $q$, and $h$ is determined based on a cross-validation search
within the training set for $h\in\{$1, 2, 3, 5, 10, 20, 30, 50, 80, 100, 200, 500$\}$.
The prediction accuracies for \texttt{KARnet} with two-, three- and four-layers are shown
in Fig.~\ref{fig_compare_deeplayers}. Except for a few data sets, the results show
insignificant variation of prediction accuracy according to the change in network depth.

\begin{figure}[hhh]
  \begin{center}
  \epsfxsize=12cm
  \epsfysize=6cm
  \epsffile[58     5   622   304]{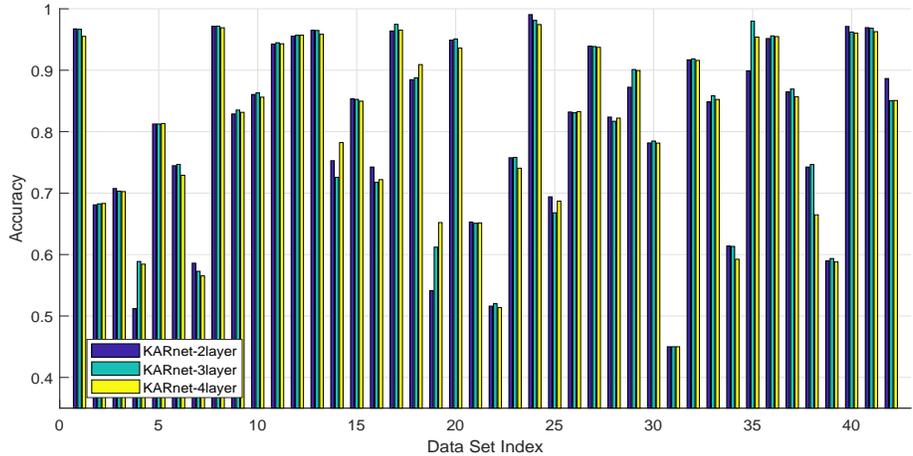}
  \caption{The effect of layers on accuracy performance.}
  \label{fig_compare_deeplayers}
  \end{center}
\end{figure}

\subsection{Summary of Results and Discussion}

With the experimental feasibility and learning efficiency validated, we shall summarize
the results obtained, from both theoretical and experimental perspectives. The
theoretical contributions of this work can be summarized as follows:
\begin{itemize}
    \item Kernel-And-Range space estimation: A theoretical framework has been established to relate the solution given by the
    kernel-and-range space estimation to that based on the least squares error estimation. The importance
    of such an establishment is that it enables a gradient-free network learning process where the
    formally inevitable iterative search process can be by-passed.

    \item Understanding of FFN learning: The proposed kernel-and-range space estimation has been applied to multilayer
    network learning wherein an analytical formulation is obtained. Based on this formulation,
    we found that solving the network training problem boils down to solving a set
    cross-coupling system of linear equations for the feedforward structure.

    \item An analytic learning algorithm: Utilizing the above results, a single-pass algorithm has been proposed to train
    the multilayer feedforward network. Essentially, the algorithm capitalizes on the random
    initialization to decouple the process of network weight estimation. This has been grounded on
    the vast feasible solutions revealed by the analysis of network representation. Despite of its
    random initialization nature, the proposed network learning shows relatively good
    prediction performance compared with the conventional gradient based learning.

    \item Network representation: The capability of network representation has been shown
    to be dependent on the number of data
    samples and the number of output vectors. It has been discovered that the warping effect of the nonlinear activation function contributes to
    the well-posedness of the problem.

    \item Network generalization: Capitalized on the minimum norm property according to
    Lemma~\ref{lemma_LS} and Lemma~\ref{lemma_LS_matrix}, no explicit regularization
    has been incorporated into the learning. The extensive experimental results
    provide relevant support towards this proposition.

\end{itemize}

To summarize the numerical evaluations, the following observations are made:
\begin{itemize}
    \item Practical feasibility: The numerical feasibility of estimating the weights of the multilayer
    network without gradient descent has been verified by utilizing a randomly initialized
    estimation. Essentially, the estimation process has been expressed in closed-from
    with neither gradient derivation nor iterative search.

    \item Prediction accuracy: Utilizing the logit activation function,
    the results of numerical prediction
    based on the stratified cross-validation show good generalization performance
    for the proposed kernel-and-range estimation comparing with
    that of the error gradient based learning. It is not surprising to observe that
    such learning performance does not depend on explicit regularization.

    \item Training CPU time: Due to the analytical and non-iterative estimation, the
    training speed for the proposed method is over 200 times faster than the competing
    gradient-based learning when the same number of hidden nodes has been chosen for the
    compared algorithms.

    \item Impact of deeper structure: According to the results based on deeper networks with
    an exponentially increasing node size towards the input, the prediction performance does not seem to
    vary significantly for deeper networks.
\end{itemize}

\section{Conclusion} \label{sec_conclusion}

We have established that the solution obtained from the kernel and range space
manipulation is equivalent to that obtained by the least squares error estimation. This
paved the ground for a gradient free parametric estimation when the system can be
expressed in linear matrix form. The fully-connected feedforward neural network is a
representative example for such a system. Based on the proposed kernel and range space
manipulation, solving the weights of the fully-connected feedforward network is found to
boil down to solving a cross-coupling equation. By initializing the weights in the hidden
layers, the system can be decoupled with hidden weights obtained in an analytical form.
In view of network representation, it has been found that the minimum number of
parameters is determined by the number of layers and the number of outputs, thanks to the
favorable warping effect of the nonlinear monotonic activations. The numerical evaluation
on real-world benchmark data sets validated the feasibility of the formulation. This
opens up the vast possibilities to expand the research direction.

\bibliographystyle{c:/ibb/tex1/IEEEtran2003}

\begin{thebibliography}{10}

\bibitem{KelleyH1}
H.~J. Kelley, ``Gradient theory of optimal flight paths,'' \emph{Ars Journal},
  vol.~30, no.~10, pp. 947--954, 1960.

\bibitem{Werbos10}
P.~J. Werbos, ``Beyond regression: New tools for prediction and analysis in the
  behavioral sciences,'' Ph.D. dissertation, Beyond Regression: New Tools for
  Prediction and Analysis in the Behavioral Sciences, 1974.

\bibitem{HechNiel1}
R.~Hecht-Nielsen, ``Theory of the backpropagation neural network,'' in
  \emph{Proceedings of International Joint Conference on Neural Networks
  (IJCNN)}, 1989, pp. 593--605.

\bibitem{Werbos33}
P.~J. Werbos, ``Backpropagation: Past and future,'' in \emph{Proceedings of the
  IEEE 1988 International Conference on Neural Networks (ICNN)}, 1988, pp.
  I--343--353.

\bibitem{Werbos11}
------, ``Backpropagation through time: what it does and how to do it,''
  \emph{Neural Networks}, vol.~78, no.~10, pp. 1550--1560, 1990.

\bibitem{Haykin1}
S.~O. Haykin, \emph{Neural Networks and Learning Machines}.\hskip 1em plus
  0.5em minus 0.4em\relax New York: Prentice Hall, 2009.

\bibitem{Funa1}
K.-I. Funahashi, ``On the approximate realization of continuous mappings by
  neural networks,'' \emph{Neural Networks}, vol.~2, no.~3, pp. 183–--192,
  1989.

\bibitem{Hornik1}
K.~Hornik, M.~Stinchcombe, and H.~White, ``Multi-layer feedforward networks are
  universal approximators,'' \emph{Neural Networks}, vol.~2, no.~5, pp.
  359--366, 1989.

\bibitem{Cybenko1}
G.~Cybenko, ``Approximations by superpositions of a sigmoidal function,''
  \emph{Math. Cont. Signal \& Systems}, vol.~2, pp. 303--314, 1989.

\bibitem{HechNiel3}
R.~Hecht-Nielsen, ``{Kolmogorov}'s mapping neural network existence theorem,''
  in \emph{Proceedings of IEEE First International Conference on Neural
  Networks (ICNN)}, vol. III, 1987, pp. 11--14.

\bibitem{Battiti1}
R.~Battiti, ``First and second order methods for learning: Between steepest
  descent and newton's method,'' \emph{Neural Computation}, vol.~4, no.~2, pp.
  141--166, 1992.

\bibitem{Patrick11}
{P. Patrick van der Smagt}, ``Minimisation methods for training feedforward
  neural networks,'' \emph{Neural Networks}, vol.~7, no.~1, pp. 1--11, 1994.

\bibitem{Barnard1}
E.~Barnard, ``Optimization for training neural nets,'' \emph{IEEE Trans. on
  Neural Networks}, vol.~3, no.~2, pp. 232--240, 1992.

\bibitem{Duin11}
W.~F. Schmidt, M.~A. Kraaijveld, and R.~P.~W. Duin, ``Feed forward neural
  networks with random weights,'' in \emph{Proceedings of 11th IAPR Int. Conf.
  on Pattern Recognition., Conf. B: Pattern Recognition Methodology and
  Systems, Vol. 2}, The Hague, 1992, pp. 1--4.

\bibitem{Guliyev1}
N.~J. Guliyev and V.~E. Ismailov, ``A single hidden layer feedforward network
  with only one neuron in the hidden layer can approximate any univariate
  function,'' \emph{Neural Computation}, vol.~28, no.~7, pp. 1289--1304, 2016.

\bibitem{ShangY1}
Y.~Shang and B.~W. Wah, ``Global optimization for neural network training,''
  \emph{IEEE Computer}, vol.~29, no.~3, pp. 45--54, Mar 1996.

\bibitem{TangZY1}
Z.~Tang and G.~J. Koehler, ``Deterministic global optimal fnn training
  algorithms,'' \emph{Neural Networks}, vol.~7, no.~2, pp. 301--311, 1994.

\bibitem{Ian1}
I.~Goodfellow, Y.~Bengio, and A.~Courville, \emph{Deep Learning}.\hskip 1em
  plus 0.5em minus 0.4em\relax MIT Press, 2016,
  \emph{http://www.deeplearningbook.org}.

\bibitem{Taylor1}
G.~Taylor, R.~Burmeister, Z.~Xu, B.~Singh, A.~Patel, and T.~Goldstein,
  ``Training neural networks without gradients: A scalable {ADMM} approach,''
  in \emph{Proceedings of the 33rd International Conference on Machine Learning
  (ICML)}, vol.~48, New York, USA, 2016, pp. 2722--2731.

\bibitem{Wilamov1}
B.~M. Wilamowski and H.~Yu, ``Neural network learning without
  backpropagation,'' \emph{IEEE Trans. on Neural Networks}, vol.~21, no.~11,
  pp. 1793--1803, 2010.

\bibitem{Jaderberg1}
M.~Jaderberg, W.~M. Czarnecki, S.~Osindero, O.~Vinyals, A.~Graves, D.~Silver,
  and K.~Kavukcuoglu, ``Decoupled neural interfaces using synthetic
  gradients,'' in \emph{Proceedings of the 34th International Conference on
  Machine Learning (PMLR)}, Sydney, Australia, 2017, pp. 1627--1635.

\bibitem{Rios1}
L.~M. Rios and N.~V. Sahinidis, ``Derivative-free optimization: a review of
  algorithms and comparison of software implementations,'' \emph{Journal of
  Global Optimization}, vol.~56, pp. 1247--1293, 2013.

\bibitem{Toh97}
K.-A. Toh, ``Learning from the kernel and the range space,'' in
  \emph{Proceedings of the 17th IEEE/ACIS International Conference on Computer
  and Information Science}, Singapore, June 2018, pp. 417--422.

\bibitem{StrangG1}
G.~Strang, \emph{Introduction to Linear Algebra}, 5th~ed.\hskip 1em plus 0.5em
  minus 0.4em\relax Wellesley: Wellesley-Cambridge Press, 2016.

\bibitem{Madych1}
W.~R. Madych, ``Solutions of underdetermined systems of linear equations,'' in
  \emph{Lecture Notes -- Monograph Series, Spatial Statistics and Imaging},
  1991, vol.~20, pp. 227--238, institute of Mathematical Statistics.

\bibitem{Boyd1}
S.~Boyd and L.~Vandenberghe, \emph{Convex Optimization}.\hskip 1em plus 0.5em
  minus 0.4em\relax Cambridge: Cambridge University Press, 2004.

\bibitem{Albert1}
A.~Albert, \emph{Regression and the Moore-Penrose Pseudoinverse}.\hskip 1em
  plus 0.5em minus 0.4em\relax New York: Academic Press, Inc., 1972, vol.~94.

\bibitem{Adi1}
{Adi Ben-Israel} and {Thomas N.E. Greville}, \emph{Generalized Inverses: Theory
  and Applications}, 2nd~ed.\hskip 1em plus 0.5em minus 0.4em\relax New York:
  Springer-Verlag, 2003.

\bibitem{SLCampbell1}
S.~L. Campbell and C.~D. Meyer, \emph{Generalized Inverses of Linear
  Transformations}, ({SIAM} edition of the work published by {Dover
  Publications, Inc.}, 1991)~ed.\hskip 1em plus 0.5em minus 0.4em\relax
  Philadelphia, USA: Society for Industrial and Applied Mathematics, 2009.

\bibitem{Duda1}
R.~O. Duda, P.~E. Hart, and D.~G. Stork, \emph{Pattern Classification},
  2nd~ed.\hskip 1em plus 0.5em minus 0.4em\relax New York: John Wiley \& Sons,
  Inc, 2001.

\bibitem{Petersen1}
K.~B. Petersen and M.~S. Pedersen, \emph{The Matrix Cookbook}, 2006,
  \emph{http://www.mit.edu/~wingated/stuff\_i\_use/matrix\_cookbook.pdf}.

\bibitem{Moore1}
E.~H. Moore, ``On the reciprocal of the general algebraic matrix,,''
  \emph{Bulletin of the American Mathematical Society}, vol.~26, pp. 394--395,
  1920.

\bibitem{Bjerhammar1}
A.~Bjerhammar, ``Rectangular reciprocal matrices with special reference to
  geodetic calculations,,'' \emph{Bull. G{\'e}od{\'e}sique}, pp. 188--220,
  1951.

\bibitem{Penrose1}
R.~A. Penrose, ``Generalized inverses for matrices,,'' \emph{Proceedings of the
  Cambridge Philosophical Society}, vol.~51, pp. 406--413, 1955.

\bibitem{ZhangCY1}
C.~Zhang, S.~Bengio, M.~Hardt, B.~Recht, and O.~Vinyals, ``Understanding deep
  learning requires rethinking generalization,'' in \emph{Proceedings of the
  5th International Conference on Learning Representations (ICLR 2017)},
  Toulon, France, 2017, pp. 1--15.

\bibitem{Poggio10}
T.~Poggio, Q.~Liao, B.~Miranda, A.~Banburski, X.~Boix, and J.~Hidary, ``Theory
  of deep learning {IIIb}: Generalization in deep networks,'' \emph{CBMM Memo
  No. 090}, pp. 1--37, June 2018.

\bibitem{Kawaguchi3}
K.~Kawaguchi, L.~P. Kaelbling, and Y.~Bengio, ``Generalization in deep
  learning,'' 2018. [Online]. Available:
  \emph{https://arxiv.org/pdf/1710.05468.pdf}

\bibitem{YBengio3}
Y.~N. Dauphin, R.~Pascanu, C.~Gulcehre, S.~G. Kyunghyun~Cho, and Y.~Bengio,
  ``Identifying and attacking the saddle point problemin high-dimensional
  non-convex optimization,'' in \emph{Advances in Neural Information Processing
  Systems (NIPS 2014)}, 2014, pp. 2933--2941.

\bibitem{Neyshabur3}
B.~Neyshabur, S.~Bhojanapalli, D.~{McAllester}, and N.~Srebro, ``Exploring
  generalization in deep learning,'' in \emph{Advances in Neural Information
  Processing Systems (NIPS 2017)}, 2017, pp. 5947--5956.

\bibitem{Foster1}
P.~L. Bartlett, D.~J. Foster, and M.~Telgarsky, ``Spectrally-normalized margin
  bounds for neural networks,'' in \emph{Advances in Neural Information
  Processing Systems (NIPS 2017)}, 2017, pp. 6241--6250.

\bibitem{Neyshabur2}
B.~Neyshabur, S.~Bhojanapalli, and N.~Srebro, ``A {PAC}-{Bayesian} approach to
  spectrally-normalized margin bounds for neural networks,'' in
  \emph{Proceedings of the 6th International Conference on Learning
  Representations (ICLR 2018)}, Vancouver, BC, Canada, 2018, pp. 1--9.

\bibitem{Neyshabur1}
B.~Neyshabur, ``Implicit regularization in deep learning,'' Ph.D. dissertation,
  Toyota Technological Institute at Chicago, 2017.

\bibitem{YBengio2}
L.~Dinh, R.~Pascanu, S.~Bengio, and Y.~Bengio, ``Sharp minima can generalize
  for deep nets,'' 2017. [Online]. Available:
  \emph{https://arxiv.org/pdf/1703.04933.pdf}

\bibitem{LiHao1}
H.~Li, Z.~Xu, G.~Taylor, and T.~Goldstein, ``Visualizing the loss landscape of
  neural nets,'' in \emph{Proceedings of the 6th International Conference on
  Learning Representations (ICLR 2018)}, Vancouver, BC, Canada, 2018, pp.
  1--17.

\bibitem{Dziugaite1}
G.~K. Dziugaite and D.~M.~Roy, ``Computing nonvacuous generalization bounds for
  deep (stochastic) neural networks with many more parameters than training
  data,'' 2017. [Online]. Available: \emph{https://arxiv.org/pdf/1703.11008.pdf}

\bibitem{Langford1}
J.~Langford and R.~Caruana, ``{(Not)} {Bounding} the true error,'' in
  \emph{Advances in Neural Information Processing Systems 14}, T.~G.
  Dietterich, S.~Becker, and Z.~Ghahramani, Eds.\hskip 1em plus 0.5em minus
  0.4em\relax MIT Press, 2002, pp. 809--816. [Online]. Available:
  \emph{http://papers.nips.cc/paper/1968-not-bounding-the-true-error.pdf}

\bibitem{Efron3}
B.~Efron, ``The estimation of prediction error,'' \emph{Journal of the American
  Statistical Association}, vol.~99, no. 467, pp. 619--–632, 2004.

\bibitem{UCI1b}
M.~Lichman, ``{UCI} machine learning repository,'' 2013.
 [Online]. Available:
  \emph{http://archive.ics.uci.edu/ml}


\bibitem{Matlab}
{The MathWorks}, ``Matlab and simulink,'' in
  \emph{[http://www.mathworks.com/]}, 2017.

\bibitem{Toh39}
K.-A. Toh, Q.-L. Tran, and D.~Srinivasan, ``Benchmarking a reduced multivariate
  polynomial pattern classifier,'' \emph{IEEE Trans. on Pattern Analysis and
  Machine Intelligence}, vol.~26, no.~6, pp. 740--755, 2004.

\end{thebibliography}

\end{document}